\journalname{JOTA}
\DeclareMathOperator*{\argmax}{argmax}
\newcommand{\wrt}{w.r.t.\@\xspace}
\newcommand{\block}[3]{\left#1 #2 \right#3} 
\newcommand{\sqb}[1]{\block{[}{#1}{]}} 
\newcommand{\ip}[1]{\block{\langle}{#1}{\rangle}} 
\newcommand{\tup}[1]{\ip{#1}} 
\newcommand{\EE}[1]{\mathsf{E}\sqb{#1}} 
\newcommand{\sn}[1]{\mathcal{#1}} 
\newcommand{\fun}[3]{#1 : #2 \rightarrow #3} 
\newcommand{\eqdef}{\doteq} 
\newcommand{\tkern}[1]{\sn{#1}} 
\newcommand{\Pkern}{\tkern{P}}
\renewcommand{\epsilon}{\varepsilon}
\def\ConstC{\operatorname{C}}
\def\smallx{\mathsf{x}}
\def\rmd{\mathrm{d}}
\def\Xset{\mathsf{X}}
\def\Aset{\mathsf{A}}
\def\Xsetn{\mathsf{X}_N}
\def\rset{\mathbb{R}}
\def\dx{{d_\Xset}}
\newcommand{\PP}{\mathrm{P}}
\def\rme{\mathrm{e}}
\def\MK{{\rm P}}
\def\nset{\mathbb{N}}
\def\Mk{{\mathcal M}}
\def\diam{\operatorname{diam}}
\newcommand{\PE}{\mathsf{E}}
\newcommand{\indiacc}[1]{\mathbb{I}_{\{#1\}}}
\def\Var{\operatorname{Var}}
\def\xiv{\boldsymbol{\xi}}
\def\opT{\mathsf{T}}
\def\opM{\mathsf{M}}
\newtheorem{assum}{A\hspace{-2pt}}
\crefname{assum}{A\hspace{-2pt}}{A\hspace{-2pt}}
\newtheorem{thm}{Theorem}[section]
\newtheorem{rem}{Remark}[section]
\newtheorem{prop}{Proposition}[section]
\newtheorem{cor}{Corollary}[section]
\newtheorem{defi}{Definition}[section]
\begin{document}

\title{UVIP: Model-Free Approach to Evaluate Reinforcement Learning Algorithms}

\author{Denis Belomestny \and
        Ilya Levin \and
        Alexey Naumov  \and
        Sergey Samsonov}

\institute{Denis Belomestny \at
           Duisburg-Essen University
           and HSE University \\
          denis.belomestny@uni-due.de
          \and
          Ilya Levin, Corresponding author \at
             HSE University\\
             ivlevin@hse.ru
        \and
              Alexey Naumov \at
              HSE University and Steklov Mathematical Institute of \\
              Russian Academy of Sciences \\
              anaumov@hse.ru
          \and
          Sergey Samsonov \at
           HSE University\\
            svsamsonov@hse.ru
}

\date{Received: date / Accepted: date}

\maketitle

\begin{abstract}
Policy evaluation is an important instrument for the comparison of different algorithms in Reinforcement Learning (RL). However, even a precise knowledge of the value function $V^{\pi}$ corresponding to a policy $\pi$ does not provide reliable information on how far the policy $\pi$ is from the optimal one. We present a novel model-free upper value iteration procedure ({\sf UVIP}) that allows us to estimate the suboptimality gap $V^{\star}(x) - V^{\pi}(x)$ from above and to construct confidence intervals for \(V^\star\). Our approach relies on upper bounds to the solution of the Bellman optimality equation via the martingale approach. We provide theoretical guarantees for {\sf UVIP} under general assumptions and illustrate its performance on a number of benchmark RL problems. \\ \\ Communicated by Alexander Vladimirovich Gasnikov.
\end{abstract}

\keywords{Reinforcement Learning \and Policy Evaluation \and Policy Error \and Tight Confidence Intervals for Optimal Value Function \and Model-free Algorithm}
\subclass{90C40 \and  65C05 \and 64G08}

\section{Introduction}
The key objective of Reinforcement Learning (RL) is to learn an optimal agent's behavior in an unknown environment. A natural performance metric is given by the value function $V^{\pi}$, which is the expected total reward of the agent following $\pi$. There are efficient algorithms to evaluate this quantity, e.g., temporal difference methods \cite{sutton_1988_td}, \cite{tsitsiklis:td:1997}. Unfortunately, even a precise knowledge of $V^{\pi}$ does not provide reliable information on how far the policy $\pi$ is from the optimal one. At the same time, practitioners are often interested in quantitative guarantees on the \emph{suboptimality gap (policy error)} $\Delta_\pi(x) \eqdef  V^\star(x)-V^\pi(x)$ or, more generally, in tight confidence bounds for the optimal value function $V^\star$. To address this issue, a popular quality measure is the \emph{regret} of the algorithm, which is the difference between the total sum of rewards accumulated when following the optimal policy and the sum of rewards obtained when following the current policy $\pi$ (see, e.g., \cite{JMLR:v11:jaksch10a}). In the setting of finite state- and action-space Markov Decision Processes (MDP), there is a variety of regret bounds for popular RL algorithms like {\sf Q}-learning \cite{q_learning_efficient}, optimistic value iteration \cite{azar:2017}, and many others. Unfortunately, regret bounds beyond the discrete setup are much less common in the literature. An even more crucial drawback of the regret-based comparison is that regret bounds are typically pessimistic and rely on the unknown quantities of the underlying MDPs.
\par 
In this paper, we address the problem of estimating $\Delta_\pi(x)$ by constructing model-free upper confidence bounds for the optimal value function $V^\star$ and, consequently, for the policy error $V^\star - V^\pi$. Our starting point is the Bellman optimality equation, which characterizes $V^\star$ as a fixed point of the Bellman operator. Rather than trying to approximate $V^\star$ directly, we introduce the notion of an \emph{upper solution} to the Bellman equation. By combining this idea with the martingale-based duality argument, we design an upper value iteration procedure (UVIP) which, given an arbitrary policy $\pi$, produces an almost-sure upper bound on $V^\pi$ and thus on $V^\star$ using only samples from the (unknown) transition kernel.

\vspace{-0.4cm}
\paragraph{Contributions and Organization} The contributions of this paper are three-fold:
\vspace{-0.3cm}
\begin{itemize}[leftmargin=5mm,noitemsep]
\item We propose a novel approach to construct model-free confidence bounds for the optimal value function \(V^\star\) based on a notion of upper solutions.
\item Given a policy \(\pi,\) we propose an upper value iterative procedure ({\sf UVIP}) for constructing an (almost sure) upper bound for \(V^{\pi}\) such that it coincides with \(V^\star\) if \(\pi=\pi^\star.\)
\item We study convergence properties of the approximate {\sf UVIP} in the case of general state and action spaces. In particular, we show that the variance of the resulting upper bound is small if $\pi$ is close to $\pi^*$, leading to tight confidence bounds for \(V^\star\).
\end{itemize}
The paper is organized as follows. First, in Section~\ref{sec:preliminary}, we briefly recall the main concepts related to MDPs and introduce some notations. Then, in Section~\ref{sec:lit_review}, we discuss the contributions related to our paper. In Sections~\ref{sec: alg th} and \ref{sec:fit-vup}, we introduce the framework of {\sf UVIP} and discuss its basic properties. In Section~\ref{sec: theory}, we perform a theoretical study of the approximate {\sf UVIP}. The numerical results are collected in Section~\ref{sec:num}. Section~\ref{sec:conclusion} concludes the paper. Section~\ref{sec:proof_of_main_res} in the appendix is devoted to the proof of the main theoretical results. 

\vspace{-0.3cm}
\paragraph{Notations and definitions.}
For $N \in \nset$, we define $[N] \eqdef \{1, \ldots, N\}$. Let us denote the space of bounded measurable functions with domain \(\Xset\) by \(\sn{B}(\Xset )\), equipped with the norm \(\|f\|_{\Xset}=\sup_{x\in \Xset} |f(x)|\) for any \(f\in \sn{B}(\Xset )\). In what follows, whenever a norm is uniquely identifiable from its argument, we will drop the index of the norm denoting the underlying space. We denote by $\MK^a$
an $\sn{B}(\Xset ) \to \sn{B}(\Xset)$ 
operator defined by $(\MK^a V )(x)= \int V(x') \MK^a(dx'|x)$. For an arbitrary metric space $(\sn{X}, \rho_{\sn{X}})$ and function $f: \sn{X} \to \rset$, we define $\mathrm{Lip}_{\rho_{\sn{X}}}(f) \eqdef \sup_{x \neq y} |f(x) - f(y)|/\rho_{\sn{X}}(x,y)$.

\vspace{-3ex}
\section{Preliminary}
\label{sec:preliminary}
A \emph{Markov Decision Process } (MDP) is a tuple $(\Xset, \Aset, \Pkern, r)$,
 where $\Xset$ is the state space, $\Aset$ is the action space,
$\Pkern= (\MK^a)_{a \in \Aset}$ is the \emph{transition probability kernel}, and
$r = (r^a)_{a \in \Aset}$ is the \emph{reward function}.
For each state $x\in \Xset$ and action $a\in \Aset$, $\MK^a(\cdot | x)$ stands
for a distribution over the states in $\Xset$,
that is, the distribution over the next states given that action $a$ is taken in the state $x$.
For each action $a\in \Aset$ and state $x\in \Xset$, $r^a(x)$ gives a reward received when action $a$ is taken in state $x$.
An MDP describes the interaction of an agent and its environment. When an action $A_t\in \Aset$ at time \(t\) is chosen by the agent, the state \(X_t\) transitions to $X_{t+1}\sim \MK^{A_t}(\cdot|X_t)$.
The agent's goal is to maximize
the \emph{expected total discounted reward}, $\PE[\sum_{t=0}^\infty \gamma^t r^{A_t}(X_t)]$,
where $0 < \gamma < 1$ is the \emph{discount factor}.
A rule describing the way an agent acts given its past actions and observations is called a \emph{policy}.
The \emph{value function} of a policy $\pi$ in a state $x \in \Xset$, denoted by $V^\pi(x)$, is $V^{\pi}(x) = \PE[\sum_{t=0}^\infty \gamma^t r^{A_t}(X_t)\vert X_0 = x]$, that is, the expected total discounted reward when the initial state is $X_0 = x$, assuming the agent follows the policy $\pi$. Similarly, we define the action-value function $Q^{\pi}(x,a) = \PE[\sum_{t=0}^\infty \gamma^t r^{A_t}(X_t)\vert X_0 = x, A_0 = a]$. An \emph{optimal policy} is one that achieves the maximum possible value amongst all policies in each state $x\in \Xset$. The \emph{optimal value} for state $x$ is denoted by $V^\star(x)$. 
A \emph{deterministic Markov policy} can be identified with a map $\pi: \Xset \to \Aset$, and the space of measurable deterministic Markov policies will be denoted by $\Pi$.
When, in addition, the reward function is bounded, which we assume from now on,
all the value functions are bounded, and
one can always find a deterministic Markov policy that is optimal \cite{puterman2014markov}. We also define a \emph{greedy policy} w.r.t. the action-value function $Q(x,a)$, which is a deterministic policy $\pi(x) \in \argmax_{a \in \Aset} Q(x,a)$. The \emph{Bellman return operator} \wrt{} $\MK$, $\opT_{\MK}: \sn{B}(\Xset) \to \sn{B}(\Xset\times\Aset)$, is defined by
$(\opT_{\MK}V)(x,a) = r^a(x) + \gamma \MK^a V(x)$,
and the \emph{maximum selection} operator $\fun{\opM}{\sn{B}(\Xset\times\Aset)}{\sn{B}(\Xset)}$ is defined by $(\opM V^{\cdot})(x) = \max_a V^a(x)$.
Then $\opM \opT_{\MK}$ corresponds to the \emph{Bellman optimality operator}; see \cite{puterman2014markov}.
The optimal value function $V^\star$ satisfies a non-linear fixed-point equation 
\begin{equation}
\label{eq:Bellman_optimality_equation}
V^\star(x) = \opM \opT_{\MK}V^\star(x),
\end{equation}
which is known as the \emph{Bellman optimality equation}. We write $Y^{x,a}$, $x \in \Xset$, $a\in \Aset$, for a random variable generated according to $\MK^a(\cdot | x)$ and define a random Bellman operator \((\widetilde{\opT}_{\MK}V)(x)\mapsto  r^a(x) + \gamma V(Y^{x,a})\). We say that a (deterministic) policy \(\pi\) is \emph{greedy} \wrt{} a function \(V\in \sn{B}(\Xset)\) if, for all \(x\in \Xset\),
\[
\textstyle{
\pi(x)\in \argmax_{a\in \Aset} \left\{r^a(x)+\gamma\MK^a V(x)\right\}.}
\]

\vspace{-3ex}
\section{Related Works}
\label{sec:lit_review}
There is a large body of work on theoretical guarantees for $\Delta_\pi(x)$ in approximate dynamic programming and model-based RL, including results on fitted Value- and $Q$-iteration and on policy error bounds for model-based approaches with factored linear models; see, for example, \cite{antos2007fitted}, \cite{szepesvari2010algorithms}, \cite{pires2016policy} and references therein. These bounds typically depend on the problem characteristics, which are not known in practice. Moreover, they are often tied to the specific algorithm that produced~$\pi$ and are not directly applicable as a generic evaluation tool for an arbitrary policy. For instance, in Approximate Policy Iteration ({\sf API}, \cite{Bertsekas+Tsitsiklis:1996}), all existing bounds for \(\Delta_\pi(x)\) depend on the one-step error induced by the approximation of the $Q$-function. This one-step error is difficult to quantify since it depends on the unknown smoothness properties of the $Q$-function. Similarly, in policy gradient methods (see, e.g., \cite{sutton:book:2018}), there is an approximation error due to the choice of the family of policies that can be hardly quantified.
\par 
The approach based on the policy optimism principle (see \cite{efroni2019tight}) suggests to initialise the value iteration algorithm using an upper bound (optimistic value) for $V^\star$, yielding a sequence of upper bounds converging to \(V^\star\). Similarly, UCRL2 and its refinements provide near-minimax regret guarantees in the average-reward and finite-horizon settings; see \cite{JMLR:v11:jaksch10a}, \cite{azar:2017} and \cite{Bourel2020UCRL3}. However, these approaches are tailored to finite state- and action-space MDPs and are not applicable to evaluate the quality of a general policy $\pi$. 
\par
The concept of upper solutions is closely related to martingale duality in optimal control and the information relaxation approach; see \cite{belomestny2018advanced}, \cite{rogers_2007_pathwise} and \cite{BrownSmith2022IR}. This idea has been successfully used in the recent paper \cite{pmlr-v119-shar20a}. This work proposes to use the duality approach to improve the performance of the $Q$-learning algorithm in finite-horizon MDPs through the use of “lookahead” upper and lower bounds. In the subsequent work \cite{el2023weakly}, the authors extend their duality-based ideas to structured weakly coupled MDPs. Furthermore, in \cite{chen2025adversarial}, the authors propose a duality-based algorithm, ADRL, which utilizes neural network approximation for high-dimensional stochastic control problems. There are also recent papers focusing on lookahead-based methods \cite{rosenberg2023planning}, \cite{merlis2024reinforcement}. Specifically, in \cite{rosenberg2023planning}, the authors propose adaptive planning horizons for planning and deep $Q$-learning, choosing the depth of lookahead as a function of state-dependent value estimates. In \cite{merlis2024reinforcement}, the authors propose a regret-optimal algorithm where the agent receives additional stochastic lookahead information (e.g., transition or reward realizations before acting).
\par
The concept of upper solutions also has a connection to distributional RL, as it can be formulated pathwise or using the distributional Bellman operator; see, e.g., \cite{lyle2019comparative}. Development of the distributional counterpart of the upper solution to the Bellman equation is a promising future research area. 

\vspace{-2ex}
\section{Upper Solutions and the Main Concept of \sf{UVIP}}
\label{sec: alg th}
A straightforward approach to bound the policy error $\Delta_{\pi}(x)$ requires the estimation of the optimal value function $V^{\star}(x)$. Recall that $V^{\star}$ is a solution of the Bellman optimality equation \eqref{eq:Bellman_optimality_equation}. 
If the transition kernel $(\MK_a)_{a \in \Aset}$ is known, the standard solution is the value iteration algorithm; see \cite{bertsekas1996stochastic}. In this algorithm, the estimates are constructed recursively via $V_{k+1} = \opM \opT_{\MK}V_{k}$. Due to Banach's fixed point theorem, $\|V_k - V^\star\|_{\Xset} \leq \gamma^{k}\|V_{0} - V^{\star}\|_{\Xset},$ provided that $V_0 \in \sn{B}(\Xset)$. Moreover, $V_{k}(x) \geq V^{\star}(x)$ for any $x \in \Xset$ and $k \in \nset$, provided that $V_0(x) \geq V^{\star}(x)$. For example, if \(\|r^a\|_{\Xset}\leq R_{\max}\) for all \(a\in \Aset\), we can take $V_0(x) = R_{\max}/(1-\gamma)$.
\par
Unfortunately, \eqref{eq:Bellman_optimality_equation} does not allow us to represent $V^\star$ as an expectation and to reduce the problem of estimating $V^\star$ to a stochastic approximation problem. Moreover, if $(\MK^{a})_{a \in \Aset}$ is replaced by its empirical estimate $\hat{\MK}^{a}$, the desired upper-bias property $V_{k}(x) \geq V^{\star}(x)$ is lost. Some recent work (e.g., \cite{efroni2019tight}) suggested a modification of the optimism-based approach applicable in case of unknown $(\MK^{a})_{a \in \Aset}$. Yet this modification contains an additional optimization step, which is unfeasible beyond tabular state- and action-space problems. Therefore, the problem of constructing upper bounds for the optimal value function $V^\star$ and the policy error remains open and highly relevant. In the following, we describe our approach, which is based on the following key assumptions:
\begin{itemize}
\item we consider infinite-horizon MDPs with discount factor $\gamma < 1$;
\item we can sample from the conditional distribution \(\MK^{a}(\cdot | x)\) for any \(x\in \Xset\) and \(a\in \Aset.\)
\end{itemize}
\par
The key concept of our algorithm is \emph{upper solution}, introduced below.
\begin{defi}
We call a function $V^{\rm{up}}$ an \textit{upper solution} to the Bellman optimality equation \eqref{eq:Bellman_optimality_equation} if
\[
V^{\rm{up}}(x)\geq \opM \opT_{\MK}V^{\rm{up}}(x)\,, \forall x \in \Xset\,.
\]
\end{defi}
Upper solutions can be used to build tight upper bounds for the optimal value function \(V^\star.\) Let \(\Phi^{x,a}\in \sn{B}(\Xset)\), $x\in \Xset$, $a\in \Aset$, be a family of \textit{martingale functions} w.r.t. the operator \(\MK^a\), that is, \(\MK^a \Phi^{x,a}(x) = 0\) for all \(a\in \Aset, x \in \Xset\). Define \(V^{\rm{up}}\) as a solution to the following fixed-point equation:
\begin{eqnarray}
\label{eq:v-up}
V^{\rm{up}}(x)= \PE[\max_a \{r^a(x)+ \gamma (V^{\rm{up}}(Y^{x,a})-\Phi(Y^{x,a}))\}], \quad Y^{x,a}\sim \MK^a(\cdot | x).
\end{eqnarray}
In terms of the random Bellman operator \(\widetilde{\opT}_{\MK}\), we can rewrite \eqref{eq:v-up} as \(V^{\rm{up}}=\PE [\opM\widetilde{\opT}_{\MK}(V^{\rm{up}}-\Phi)]\). It is easy to see that \eqref{eq:v-up} defines an upper solution. Indeed, for any $x \in \Xset$,
\begin{eqnarray*}
V^{\rm{up}}(x) &\geq& \max_a\PE[r^a(x)+ \gamma (V^{\rm{up}}(Y^{x,a})-\Phi(Y^{x,a}))] 
\\
&=& \max_a\{r^a(x) + \gamma \MK^a V^{\rm{up}}(x)\} = \opM \opT_{\MK}V^{\rm{up}}(x)\,.
\end{eqnarray*}
Note that unlike the optimal state value function $V^\star$, the upper solution \(V^{\rm{up}}\) is represented as an expectation, which allows us to use various stochastic approximation methods to compute \(V^{\rm{up}}\). Banach's fixed-point theorem implies that for iterates 
\[
V_{k+1}^{\rm{up}}=\PE[\opM\widetilde{\opT}_{\MK}(V_{k}^{\rm{up}}-\Phi)],\quad k \in \nset,
\]
we have convergence \(V_{k}^{\rm{up}}\to V^{\rm{up}}\) as \(k\to \infty.\) Moreover, \(V^{\rm{up}}\) does not depend on \(V_{0}^{\rm{up}}\) and \(V^{\rm{up}}_{k}(x) \geq V^\star(x)\) for any $k \in \nset$, $x \in \Xset$, provided that $V^{\rm{up}}_{0}(x) \geq V^\star(x)$. Given a policy $\pi$ and the corresponding value function $V^\pi$, we set \(\Phi_\pi^{x,a}(y) \eqdef V^\pi(y)-(\MK^a V^\pi)(x)\). It is easy to check that $\MK^a \Phi_\pi^{x,a}(x) = 0$. This leads to the upper value iterative procedure ({\sf UVIP}):
\begin{equation}
\label{eq:pi-up}
\begin{split}
V_{k+1}^{\rm{up}}(x) &= \PE[\opM\widetilde{\opT}_\MK(V_k^{\rm{up}}-\Phi^{x,\cdot}_{\pi})(x)] \\
&= \PE\bigl[\max_a \{r^a(x)+ \gamma (V_k^{\rm{up}}(Y^{x,a})-\Phi_\pi^{x,a}(Y^{x,a}))\}\bigr]\,,
\end{split}
\end{equation}
with \(V_{0}^{\rm{up}}\in \sn{B}(\Xset)\). The algorithm~\ref{alg UVIP} contains the pseudocode of the {\sf UVIP} for MDPs with finite state and action spaces. Several generalizations are discussed in the next section.

\begin{algorithm}[ht]
        \SetAlgoLined
        \KwIn{ $V^\pi$, $
        V_0^{\mathrm{up}}$, $\gamma$, $\varepsilon$}
        \KwResult{$V^{\mathrm{up}}$}
        \For{$x\in \Xset, a \in \Aset$}{ 
                \For{$y \in \Xset$}{
                       $\Phi_\pi^{x,a}(y)=V^{ \pi}(y)-(\MK^a V^{\pi})(x)$;
                }
        }
        $k = 1$; \While{ $\|V_{k}^{\mathrm{up}} - V_{k-1}^{\mathrm{up}}\|_{\Xset} > \varepsilon$}{
            \For{$x\in \Xset$}{ 
                    $V_{k+1}^{\rm{up}}(x)= \PE[\max_a \{r^a(x)+ \gamma (V_k^{\rm{up}}(Y^{x,a})-\Phi^{x,a}_\pi(Y^{x,a}))\}],\quad Y^{x,a}\sim \MK^a(\cdot | x)$;
            }
            $k = k + 1$\;
        }
        $V^{{\rm up}} = V_{k}^{\mathrm{up}}$.
        \caption{\sf{UVIP}}
        \label{alg UVIP}
    \end{algorithm}

Taking \(\Phi^{x,a}(y) \eqdef V^\star(y)-(\MK^a V^\star)(x)\), we get with probability $1$:
\begin{equation}
\label{eq:bellman-as}
\begin{split}
V^\star(x) 
&= (\opM\widetilde{\opT}_{\MK}(V^\star-\Phi^{x,\cdot}))(x) \\
&= \max_a \{r^a(x)+ \gamma (V^\star(Y^{x,a})-\Phi^{x,a}(Y^{x,a}))\}\,,
\end{split}
\end{equation}
that is, \eqref{eq:bellman-as} can be viewed as an almost sure version of the Bellman equation \(V^\star = \opM \opT_{\MK}V^\star.\)


\par
The upper solutions can be used to evaluate the quality of policies and to construct confidence intervals for \(V^\star\). It is clear that 
\[
V^{\pi}(x)\leq V^\star(x)\leq V_{k}^{\rm{up}}(x)
\]
for any \(k \in \nset\) and \(x\in \Xset\), and thus a policy \(\pi\) can be evaluated by computing the difference \(\Delta^{{\rm up}}_{\pi,k}(x)\doteq V_{k}^{\rm{up}}(x)-V^{\pi}(x)\geq \Delta_{\pi}(x)\). Representations \eqref{eq:pi-up} and \eqref{eq:bellman-as} imply
\begin{eqnarray*}
\left\Vert V_{k+1}^{{\rm {up}}}-V^{\star}\right\Vert _{\Xset}\leq\gamma\left\Vert V_{k}^{{\rm {up}}}-V^{\star}\right\Vert _{\Xset}+2\gamma\left\Vert V^{\pi}-V^{\star}\right\Vert _{\Xset},\quad k \in \nset\,.
\end{eqnarray*}
Hence, we derive that $\Delta_\pi^{{\rm up}} \doteq \lim_{k \to \infty} \Delta_{\pi,k}^{{\rm up}}$ satisfies 
\begin{eqnarray}
\label{eq:Deltapi}
\|\Delta_{\pi}\|_{\Xset} \leq \|\Delta_{\pi}^{{\rm up}}\|_{\Xset}\leq \left(1+2\gamma (1-\gamma)^{-1}\right)\|V^\star-V^\pi\|_{\Xset}.
\end{eqnarray}
As a result, \(\Delta_{\pi}^{{\rm up}} = 0\) if \(\pi=\pi^\star\) and the corresponding confidence intervals collapse into a single point.
Moreover, for a policy $\pi$ which is greedy w.r.t. an action-value function $Q^{\pi}(x,a)$, it holds that \(V^\pi(x)\geq V^\star(x)- 2(1-\gamma)^{-1}\|Q^{\pi}-Q^\star\|_{\Xset\times \Aset}\) (see \cite{szepesvari2010algorithms}). Thus, we can rewrite the bound \eqref{eq:Deltapi} in terms of action-value functions  
\[
\|\Delta_{\pi}^{{\rm up}}\|_{\Xset}\leq 2 \left( 1+2\gamma (1-\gamma)^{-1}\right) (1-\gamma)^{-1} \|Q^{\pi}-Q^\star\|_{\Xset\times \Aset}.
\]  
The quantity \(\Delta_{\pi,k}^{{\rm up}}\) can be used to measure the quality of policies \(\pi\) obtained by many well-known algorithms like {\sf Reinforce} (\cite{Williams:92}), {\sf API} (\cite{Bertsekas+Tsitsiklis:1996}), {\sf A2C} (\cite{pmlr-v48-mniha16}) and {\sf DQN} (\cite{dqn}). 

\paragraph{Comparison with PAC-based confidence intervals and IPOC framework.}
We highlight the core differences between our approach and PAC-based confidence intervals. Typically, the latter provide instance-specific bounds (depending on the particular policy $\pi$ and on the properties of the particular algorithm that outputs $\pi$), which additionally depend on problem characteristics that are practically unknown. In contrast, we aim to suggest a \emph{generic} approach to estimating $\Delta^{\pi}(x)$ for an \emph{arbitrary} input policy $\pi$. Our approach can be integrated into the IPOC framework \cite{dann2019policy}, since it provides a suboptimality gap that can be interpreted precisely as an optimality certificate. At the same time, the IPOC approach is not a direct counterpart of the $\sf{UVIP}$ procedure, as IPOC itself does not provide an estimate of the suboptimality gap $\Delta^{\pi}(x)$ (the optimality certificate in the terminology of \cite{dann2019policy}), but instead relies on estimates derived from PAC-style analyses of particular policies $\pi$. Moreover, following the analysis in \cite{dann2019policy}, one can translate the finite-sample bounds on the $\sf{UVIP}$ error given in Theorem~\ref{th: main} into regret bounds. We leave the detailed analysis of the IPOC procedure based on $\sf{UVIP}$ outputs for particular MDP settings as an important direction for future work.

\vspace{-3ex}
\section{Approximate \sf{UVIP}}
\label{sec:fit-vup}
In order to implement the approach described in the previous section, we need to construct empirical estimates for the outer expectation and the one-step transition operator \(\MK^a\) in \eqref{eq:pi-up}. While in the tabular case this boils down to a straightforward Monte Carlo, in the case of infinitely many states we need an additional approximation step.
Algorithm~\ref{alg main} contains the pseudocode of \emph{Approximate \sf{UVIP}} algorithm. Our main assumption is that sampling from $\MK^a(\cdot|x)$ is available for any $a \in \Aset$ and $x \in \Xset$. For simplicity, we assume that the value function $V^{\pi}$ is known, but it can be replaced by its (lower-biased) estimate both in Algorithm~\ref{alg main} and in subsequent theoretical results. We set $G$ as
\begin{align*}
    G_k(x, a, y) = r^a(x) + \gamma\bigl(\widehat{V}_{k}^{\rm{up}}(y) - V^{\pi}(y) + M_1^{-1}\sum_{\ell=1}^{M_1} V^\pi(Y_\ell^{x,a})\bigr).
\end{align*}
The proposed algorithm proceeds as follows. 
\begin{algorithm}[ht]
        \SetAlgoLined
        \KwIn{ Sample $(\smallx_1, \dots, \smallx_N)$; $V^\pi, \widetilde{V}_0^{\mathrm{up}}$, $M_1$, $M_2$, $\gamma$, $\varepsilon$}
        \KwResult{$\widehat{V}^{\mathrm{up}}$}
        Generate $r^a(\smallx_i), Y^{\smallx_i,a}_j \sim \MK^{a}(\cdot | \smallx_i)$ for all $i\in [N]$, $j\in [M_1+M_2]$, \(a\in \Aset\)\;
        $k = 1$;
        \While{ $\sup\limits_{x \in \Xset_N}|\widetilde{V}_{k}^{\mathrm{up}}(x) - \widetilde{V}_{k-1}^{\mathrm{up}}(x)| > \varepsilon$}{
            \For{$a\in \Aset$}{
                \For{$i\in [N]$}{
                    \For{$j\in [M_1+M_2]$}{
                        $\widehat{V}_{k}^{\rm{up}}(Y_j^{\smallx_i,a})=I[\widetilde V_{k}^{\mathrm{up}}](Y_j^{\smallx_i,a})$ with \(I[\cdot](\cdot)\) defined in \eqref{eq:interp}\;
                    }
                    $\overline V^{(i,a)} = M^{-1}_1 \sum\limits_{j=1}^{M_1}V^{\pi}(Y_j^{\smallx_i,a})$\;
                }
            }

            \For{$i \in [N]$}{
                    $
                    \widetilde V_{k+1}^{\mathrm{up}}(\smallx_i)=M_2^{-1}\sum\limits_{j=M_1+1}^{M_1+M_2} \max\limits_{a\in \mathcal{A}}\bigl\{r^a(\smallx_i) + \gamma\bigl(\widehat{V}_{k}^{\rm{up}}(Y_j^{\smallx_i,a}) - V^{\pi}(Y_j^{\smallx_i,a}) + \overline V^{(i,a)}\bigr)\bigr\}
                    $\;
            }

            $k = k + 1$\;
        }
        $\widehat V^{{\rm up}} = \widehat V_{k}^{\mathrm{up}}$.
        \caption{Approximate \sf{UVIP}}
        \label{alg main}
    \end{algorithm}
At the $(k+1)$th iteration, given a previously constructed approximation \(\widehat V^{\rm{up}}_k\), we compute
\begin{eqnarray*}
\widetilde{V}_{k+1}^{\rm{up}}(\smallx_i) = M_2^{-1}\sum_{j=M_1+1}^{M_1+M_2}\max_a \Bigl\{G_k(\smallx_i, a, Y_j^{\smallx_i,a})\Bigr\}\,,
\end{eqnarray*}
where \(\Xset_N = \{\smallx_1,\ldots,\smallx_N\}\) are design points, either deterministic or sampled from some distribution on \(\Xset.\)  
Then the next iterate \(\widehat V^{\rm{up}}_{k+1}\) is obtained via an interpolation scheme based on the points $\widetilde{V}_{k+1}^{\rm{up}}(\smallx_1),\ldots,\widetilde{V}_{k+1}^{\rm{up}}(\smallx_N)$ such that \(\widehat{V}_{k+1}^{\rm{up}}(\smallx_i)=\widetilde{V}_{k+1}^{\rm{up}}(\smallx_i),\) \(i=1,\ldots,N.\) Note that interpolation is needed since $\widehat{V}_{k+1}^{\rm{up}}$ has to be calculated at the (random) points $Y_j^{\smallx_i,a}$, which may not belong to the set $\Xset_{N}$.
In the tabular case when $|\Xset| < \infty$ is not large, one can omit the interpolation and take $\Xsetn = \Xset$. 
\par
\vspace{-0.1cm}
In a more general setting, when $(\Xset, \rho_\Xset)$ is an arbitrary compact metric space, we suggest using an appropriate interpolation procedure. The one described below is particularly useful for our situation, where the function to be interpolated is only Lipschitz continuous (due to the presence of the maximum). The \emph{optimal} central interpolant for a function \(f\in \mathrm{Lip}_{\rho_{\Xset}}(L)\) is defined as
\begin{eqnarray}
\label{eq:interp}
I[f](x) \eqdef (H_f^{\mathrm{low}}(x)+H_f^{\mathrm{up}}(x))/2,
\end{eqnarray}
where
\begin{eqnarray*}
H^{\mathrm{low}}_f(x) =\max_{\ell \in [N]} (f(\smallx_\ell)-L\rho_{\Xset}(x, \smallx_\ell)), \,
H^{\mathrm{up}}_f(x)=\min_{\ell \in [N]} (f(\smallx_\ell)+L \rho_{\Xset}(x,\smallx_\ell)).
\end{eqnarray*}
Note that \(H_f^{\mathrm{low}}(x)\leq f(x)\leq H_f^{\mathrm{up}}(x),\) \(H_f^{\mathrm{low}},H_f^{\mathrm{up}}\in \mathrm{Lip}_{\rho_{\Xset}}(L)\), and hence 
\(I[f]\in  \mathrm{Lip}_{\rho_\Xset}(L).\) An efficient algorithm is proposed in \cite{beliakov2006interpolation} to compute the values of the interpolant \(I[f]\) without knowing $L$ in advance. The so-constructed interpolant achieves the bound 
\begin{eqnarray}
\label{eq:interp-error}
\textstyle{\| f-I[f]\|_{\Xset}\leq L\max_{x\in \Xset}\min_{\ell \in [N]}\rho_{\Xset}(x,\smallx_\ell)}. 
\end{eqnarray}
 The quantity 
 \begin{equation}
\label{eq: covering radius}
\textstyle{\rho(\Xset_N, \Xset) \eqdef  \max_{x\in \Xset}\min_{\ell \in [N]}\rho_{\Xset}(x,\smallx_\ell)}
\end{equation} in the r.h.s. of \eqref{eq:interp-error} is usually called covering radius (also known as the mesh norm or fill radius) of \(\Xset_N\) with respect to \(\Xset\). 

\vspace{-4ex}
\section{Theoretical Results}
\label{sec: theory}
In this section, we analyze the distance between $(\widehat{V}_{k}^{{\rm {up}}})_{k \in \nset}$ and $V^{{\star}}$, where $\widehat{V}_{k}^{\rm{up}}(x)$ is the $k$-th iterate of Algorithm~\ref{alg main}. Recall that \(\Xset_N=\{\smallx_1,\ldots,\smallx_N\}\) is a set of design points (random or deterministic) used in the iterations of Algorithm~\ref{alg main}. First, note that with \(\overline{V}_{k}^{\rm{up}}(x)\eqdef\mathsf{E}\bigl[\widehat{V}_{k}^{\rm{up}}(x)\bigr]\) we have 
\begin{eqnarray}
\overline{V}_{k}^{\rm{up}}(x)\geq \max_a \bigl\{r^a(x)+ \gamma \MK^a\overline{V}_{k-1}^{\rm{up}}(x)\bigr\},\quad x\in \Xset_N, \quad k \in \nset.
\end{eqnarray}
Furthermore, if $\widehat{V}_{0}^{\rm{up}}(x) \geq V^\star(x)$ for $x \in \Xsetn,$ then \( \overline{V}_{k}^{\rm{up}}(x)\geq V^{\star}(x) \) for any $x\in \Xset_N$ and \(k \in \nset\).
Hence, $\widehat{V}_{k}^{\rm{up}}$ is an upper-biased estimate of $V^{\star}$ for any $k\geq 0$.

Before stating our convergence results, we first state a number of technical assumptions. 
\begin{assum}
\label{ass:X}
We suppose that $(\Xset, \rho_\Xset)$ and $(\Aset, \rho_\Aset)$ are compact metric spaces. Moreover, $\Xset \times \Aset$ is equipped with some metric $\rho$ such that $\rho\bigl((x,a),(x',a)\bigr) = \rho_\Xset(x,x')$ for any $x,x' \in \Xset$ and $a \in \Aset$.
\end{assum}

We put special emphasis on the cases when $\Xset$ (resp. $\Aset$) is either finite or $\Xset \subseteq [0,1]^{\dx}$ with $\dx \in \nset$.

\begin{assum}
\label{ass:Pa}
There exists a measurable mapping \(\psi:\) \(\Xset\times \Aset \times \rset^{m} \to \Xset\) such that \(Y^{x,a}=\psi(x,a,\xi),\)
where \(\xi\) is a random variable with values in \(\Xi \subseteq \rset^{m}\) and distribution \(P_\xi\) on \(\Xi,\) that is, \(\psi(x,a,\xi)\sim \MK^a(\cdot|x).\)
\end{assum}

A\ref{ass:Pa} is a reparametrization assumption which is popular in RL, see e.g. \cite{ciosek_expected_PG_JMLR}, \cite{SVG_nips_2015}, \cite{liu2018action-dependent} and the related discussions. This assumption is rather mild, since a large 
class of controlled Markov chains can be represented in the form of random iterative functions, see \cite{douc2018markov}.

\begin{assum}
\label{ass: r-Vpi}
For some positive constant \(R_{\max}\) and all \(a\in \Aset\), $\|r^a\|_{\Xset}\leq R_{\max}\,.$

\end{assum}
\begin{assum}
\label{ass: r-Vpi lip}
For some positive constants $L_{\psi}\leq 1, L_{\max},L_{\pi}$ and all \(a\in \Aset,\) \(\xi \in \Xi\),
\[
\mathrm{Lip}_{\rho_{\Xset}}(r^{a}(\cdot))\leq L_{\max},  \quad \mathrm{Lip}_{\rho}(\psi(\cdot,\cdot,\xi))\leq L_{\psi}, \quad 
\mathrm{Lip}_{\rho}((V^{\pi} \circ \psi)(\cdot,\cdot,\xi)) \leq L_{\pi}\,.
\]
\end{assum}
\begin{rem}
If $|\Xset| < \infty$ and $|\Aset| < \infty$, the assumption A\ref{ass: r-Vpi lip} holds with $\rho_{\Xset}(x,x') = \indiacc{x \neq x'}$, $\rho((x,a), (x',a')) = \indiacc{(x,a) \neq (x',a')}$ and constants $L_{\psi} = 1,\,L_{\max} = R_{\max}$, and $L_{\pi} = R_{\max}/(1-\gamma)$.
\end{rem}
The condition $L_{\psi}\leq 1$ implies a non-explosive behavior of the Markov chain $(X_i)_{i\geq 0}$. This assumption is common in theoretical RL studies, see e.g. \cite{pires2016policy}. If $L_{\psi}< 1$, the corresponding Markov kernel contracts and there exists a unique invariant probability measure, see e.g. \cite{jarner2001locally}.

Suppose that we use an i.i.d. sample \(\boldsymbol{\xi}_k=(\xi_{k,1}\ldots,\xi_{k,M_1+M_2})\sim \mathrm{P}_\xi^{\otimes (M_1+M_2)}\) for each \(k \in [K]\) to generate \(Y^{x,a}_{j}=\psi(x,a,\xi_{k,j}), j \in [M_1+M_2]\) and these samples are independent for different \(k\). For $\varepsilon > 0$, we denote by $\sn{N}(\Xset \times \Aset, \rho, \varepsilon)$ the covering number of the set $\Xset \times \Aset$ w.r.t. metric $\rho$, that is, the smallest cardinality of an $\varepsilon$-net of $\Xset \times \Aset$ w.r.t. $\rho$. Then $\log \sn{N}(\Xset \times \Aset, \rho, \varepsilon)$ is the metric entropy of $\Xset \times \Aset$, and
$$
I_{\mathcal D} \eqdef \int_0^{\mathsf{D}} \sqrt{\log  \sn{N}\bigl(\Xset \times \Aset, \rho, u \bigr)}\, \rmd u
$$
is the Dudley's integral. Here $\mathsf{D} \eqdef \max\limits_{(x,a), (x',a') \in \Xset \times \Aset} \rho((x,a),(x',a'))$. Recall that $\rho(\Xsetn, \Xset)$ defined in \eqref{eq: covering radius} is the covering radius of the set $\Xset_N$ w.r.t. $\Xset$. We now state one of our main theoretical results.
\begin{thm}
\label{th: main}
Let A\ref{ass:X} -- A\ref{ass: r-Vpi lip} hold and suppose that 
$\mathrm{Lip}_{\rho_{\Xset}}(\widehat V^{\rm{up}}_{0}) \leq L_0$
with some constant $L_0 > 0$. Then for any $k \in \nset$ and $\delta \in (0,1)$, it holds with probability at least $1-\delta$ that
\begin{equation}
\label{eq:prob}
\Vert \widehat{V}_{k}^{{\rm {up}}}-V^{{\rm *}}\Vert _{\Xset} \lesssim \gamma^k\bigl\Vert \widehat{V}_{0}^{{\rm {up}}}-V^{{\rm *}}\bigr\Vert _{\Xset} + \left\Vert V^{\pi}-V^{{\rm *}}\right\Vert _{\Xset} +\frac{I_{\mathcal D} +  \mathsf{D} \sqrt{\log(1/\delta)}}{\sqrt{M_1}}  + \rho(\Xset_N, \Xset)\,.
\end{equation}
In the above bound \(\lesssim\) stands for inequality up to a constant depending on $\gamma, L_{\max}, L_{\psi}, L_{\pi}, L_{0}$ and $R_{\max}$. A precise dependence on the aforementioned constants can be found in \eqref{eq:Markov_inequality_explicit_constant} in the Appendix. 
\end{thm}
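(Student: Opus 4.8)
The plan is to propagate the error one iteration at a time. Write $a_k \eqdef \|\widehat V_k^{\rm{up}} - V^\star\|_\Xset$ and let $\mathcal{U}_\pi$ denote the exact UVIP map of \eqref{eq:pi-up}, so that $g_{k+1} \eqdef \mathcal{U}_\pi \widehat V_k^{\rm{up}}$ is a genuine (noise-free, grid-free) function on $\Xset$. Since $\widehat V_{k+1}^{\rm{up}} = I[\widetilde V_{k+1}^{\rm{up}}]$, I would start from the decomposition
\[
\widehat V_{k+1}^{\rm{up}} - V^\star = \bigl(I[\widetilde V_{k+1}^{\rm{up}}] - I[g_{k+1}]\bigr) + \bigl(I[g_{k+1}] - g_{k+1}\bigr) + \bigl(g_{k+1} - V^\star\bigr),
\]
and bound the three brackets by the Monte Carlo error at the design points, the interpolation error, and a contraction term, respectively.

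For the last bracket I would reuse the exact argument leading to the display preceding \eqref{eq:Deltapi}: taking expectations in the almost sure Bellman identity \eqref{eq:bellman-as}, subtracting \eqref{eq:pi-up}, and using that $\max_a$ is $1$-Lipschitz together with $\Phi^{x,a} - \Phi_\pi^{x,a} = (V^\star - V^\pi) - \MK^a(V^\star-V^\pi)(x)$ yields $\|g_{k+1}-V^\star\|_\Xset \le \gamma a_k + 2\gamma\|V^\pi - V^\star\|_\Xset$. The middle bracket is the interpolation error of the Lipschitz function $g_{k+1}$, controlled by \eqref{eq:interp-error} as $\|I[g_{k+1}] - g_{k+1}\|_\Xset \le L_{\mathrm{up}}\,\rho(\Xset_N,\Xset)$. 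For the first bracket I would use that the central interpolant \eqref{eq:interp} is a non-expansion in the supremum norm of its nodal values, $\|I[f]-I[h]\|_\Xset \le \max_{\ell\in[N]}|f(\smallx_\ell)-h(\smallx_\ell)|$ (which follows directly from the $\max$/$\min$ form of $H^{\mathrm{low}},H^{\mathrm{up}}$), so this term is at most $\max_{i\in[N]}|\widetilde V_{k+1}^{\rm{up}}(\smallx_i)-g_{k+1}(\smallx_i)|$, the design-point Monte Carlo error.

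Both the interpolation bound and the forthcoming chaining bound require a uniform Lipschitz constant for the iterates, so I would first establish one. A direct computation using A\ref{ass: r-Vpi lip}, the restriction $\rho((x,a),(x',a)) = \rho_\Xset(x,x')$ from A\ref{ass:X}, and the facts that $\max_a$ and $\PE$ preserve Lipschitz constants gives $\mathrm{Lip}_{\rho_\Xset}(\mathcal{U}_\pi V) \le L_{\max} + \gamma L_\psi\,\mathrm{Lip}_{\rho_\Xset}(V) + 2\gamma L_\pi$. Since $\gamma L_\psi < 1$, this is a contraction on Lipschitz constants; as $I[\cdot]$ preserves the constant, all iterates stay $L_{\mathrm{up}}$-Lipschitz with $L_{\mathrm{up}} = \max\{L_0, (L_{\max}+2\gamma L_\pi)/(1-\gamma L_\psi)\}$, which is the constant hidden in $\lesssim$.

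The main obstacle is the uniform control of the design-point Monte Carlo error, which I would handle by an empirical-process argument. Writing out $\widetilde V_{k+1}^{\rm{up}}(\smallx_i) - g_{k+1}(\smallx_i)$ via A\ref{ass:Pa}, it splits into an inner-average part $\gamma\max_a |\overline V^{(i,a)} - \MK^a V^\pi(\smallx_i)|$ and an outer-average part over the independent block $j>M_1$. The key point is that, because every $Y_j^{x,a} = \psi(x,a,\xi_{k,j})$ is driven by the same noise $\xi_{k,j}$, the centered inner average
\[
G(x,a) \eqdef M_1^{-1}\sum_{l=1}^{M_1}\bigl[V^\pi(\psi(x,a,\xi_{k,l})) - \MK^a V^\pi(x)\bigr]
\]
is a single empirical process indexed by $(x,a)\in\Xset\times\Aset$, and A\ref{ass: r-Vpi lip} makes its increments sub-Gaussian with respect to the metric $L_\pi\rho/\sqrt{M_1}$. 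A chaining (Dudley) bound together with a sub-Gaussian tail then gives, with probability at least $1-\delta$, $\sup_{(x,a)}|G(x,a)| \lesssim (I_{\mathcal D} + \mathsf{D}\sqrt{\log(1/\delta)})/\sqrt{M_1}$, which dominates $\max_{i,a}|G(\smallx_i,a)|$; the outer average yields an analogous term in $M_2^{-1/2}$ by the same chaining argument, which one absorbs into the stated bound (e.g.\ taking $M_2\gtrsim M_1$). Collecting the brackets gives the one-step inequality $a_{k+1} \le \gamma a_k + 2\gamma\|V^\pi-V^\star\|_\Xset + \mathrm{(MC)} + L_{\mathrm{up}}\rho(\Xset_N,\Xset)$; a union bound over the $k$ independent sampling rounds (absorbing the resulting $\log k$ into the tail term) and unrolling the geometric recursion with ratio $\gamma$ produce \eqref{eq:prob}. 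I expect the verification of the sub-Gaussian increment condition and the attendant Lipschitz bookkeeping across iterations to be the delicate part.
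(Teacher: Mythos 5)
Your architecture is the right one and most of the ingredients coincide with the paper's: the Lipschitz recursion $L_{k+1}\le L_{\max}+\gamma(L_kL_\psi+2L_\pi)$ with its fixed point (this is exactly \eqref{eq:Lipsh_constant_recurrence}--\eqref{eq:L_V_definition}), the Dudley/chaining control of the inner empirical process $G(x,a)$ (this is Proposition~\ref{prop: emp proc}), the interpolation error via the covering radius, and the contraction step $\gamma a_k+2\gamma\|V^\pi-V^\star\|_\Xset$ obtained from the cancellation $\Phi^{x,a}-\Phi^{x,a}_\pi=(V^\star-V^\pi)-\MK^a(V^\star-V^\pi)(x)$. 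However, two of your steps lose more than the statement of the theorem permits. The first is structural: by routing the error through $g_{k+1}=\mathcal U_\pi\widehat V_k^{\rm{up}}$ you are forced to concentrate the \emph{outer} Monte Carlo average over the block $j>M_1$, which produces an extra term of order $R_{\max}(1-\gamma)^{-1}(I_{\mathcal D}+\mathsf D\sqrt{\log(1/\delta)})/\sqrt{M_2}$ that is simply not present in \eqref{eq:prob}; your remedy ``take $M_2\gtrsim M_1$'' is an assumption the theorem does not make (it holds for $M_2=1$). The paper never compares to $g_{k+1}$: it compares each summand $R_k^x(\xi_{k,j};\xiv_{k,M_1})$ \emph{directly} to $V^\star(x)$ written through the almost sure Bellman identity \eqref{eq:bellman-as} driven by the \emph{same} $Y_j^{x,a}$, so that $\widehat V_k^{\rm{up}}(Y_j^{x,a})-V^\star(Y_j^{x,a})$ pairs pathwise and each summand is already within $\gamma\|\widehat V_k^{\rm{up}}-V^\star\|_\Xset+2\gamma\|V^\pi-V^\star\|_\Xset+\gamma\sup_{x,a}|G(x,a)|$ of $V^\star(x)$ (see \eqref{eq: R - Vstar}). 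The $M_2$-average is then handled by the triangle inequality alone and contributes nothing; the only stochastic object requiring uniform control is the $M_1$-block process $G$.

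The second loss is the union bound over the $k$ rounds of fresh noise: your tail becomes $\sqrt{\log(k/\delta)}$, whereas \eqref{eq:prob} has $\sqrt{\log(1/\delta)}$ with a constant independent of $k$. The paper avoids this by running the recursion at the level of moments, $\Mk_k=\PE^{1/p}[\|\widehat V_k^{\rm{up}}-V^\star\|_\Xset^p]$, using the $L^p$ form of the chaining bound in Proposition~\ref{prop: emp proc} (the $\sqrt p$ factor), unrolling the geometric recursion to get \eqref{eq:lp}, and applying Markov's inequality once at the end with $p\asymp\log(1/\delta)$. Both defects are repairable without changing your underlying estimates, but as written your argument proves a variant of the theorem (extra $M_2^{-1/2}$ term and extra $\sqrt{\log k}$ factor), not the stated one.
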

\vspace{-0.3cm}
\begin{proof}
The proof is given in Section~\ref{sec:proof_main}.
\end{proof}
\vspace{-0.3cm}
Below we specify the result of Theorem~\ref{th: main} for two particular cases of MDPs, which are common in applications. The first one is an MDP with finite state and action spaces, and the second one is an MDP with the state space $\Xset \subseteq [0,1]^\dx$. 
\begin{cor}
\label{cor:cor_discrete_main}
Let $|\Xset|, |\Aset| < \infty$ and assume A\ref{ass:Pa}, A\ref{ass: r-Vpi}. 
Then for any $k \in \nset$ and \(\delta \in (0,1)\) it holds with probability at least $1-\delta$ that
\begin{align*}
\Vert \widehat{V}_{k}^{{\rm {up}}}-V^{{\rm *}}\Vert _{\Xset} &\lesssim \gamma^k\bigl\Vert \widehat{V}_{0}^{{\rm {up}}}-V^{{\rm *}}\bigr\Vert _{\Xset} + \left\Vert V^{\pi}-V^{{\rm *}}\right\Vert _{\Xset}
+ \sqrt{\frac{\log (|\Xset| |\Aset|/\delta)}{M_1}}\,.
\end{align*}
The precise expression for the constants can be found in \eqref{eq:prob_bound_tabular_case} in the Appendix.
\end{cor}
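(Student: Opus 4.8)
The plan is to obtain this corollary as a direct specialization of Theorem~\ref{th: main} to the discrete setting, so no new analysis is required beyond evaluating the geometric quantities that appear in \eqref{eq:prob}. First I would equip $\Xset$ and $\Aset$ with the discrete metrics $\rho_\Xset(x,x') = \indiacc{x\neq x'}$ and $\rho\bigl((x,a),(x',a')\bigr) = \indiacc{(x,a)\neq(x',a')}$, under which any finite set is a compact metric space, so that A\ref{ass:X} holds; the compatibility condition $\rho\bigl((x,a),(x',a)\bigr) = \rho_\Xset(x,x')$ is immediate. As noted in the remark following A\ref{ass: r-Vpi lip}, the Lipschitz assumption A\ref{ass: r-Vpi lip} is then automatically satisfied with $L_\psi = 1$, $L_{\max} = R_{\max}$ and $L_\pi = R_{\max}/(1-\gamma)$, while every bounded function is Lipschitz with respect to the discrete metric, so the hypothesis $\mathrm{Lip}_{\rho_\Xset}(\widehat V_0^{\rm up}) \le L_0$ holds with a finite $L_0$ as soon as $\widehat V_0^{\rm up}$ is bounded (e.g. $\widehat V_0^{\rm up}\equiv R_{\max}/(1-\gamma)$). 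Since A\ref{ass:Pa} and A\ref{ass: r-Vpi} are assumed directly in the corollary, all hypotheses of Theorem~\ref{th: main} are in force and I may invoke \eqref{eq:prob}.

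Next I would evaluate the two geometric quantities entering \eqref{eq:prob}. Because $|\Xset|<\infty$, I am free to take the design set $\Xset_N = \Xset$; the interpolation step then becomes trivial and the covering radius vanishes, $\rho(\Xset_N, \Xset) = 0$. For the discrete metric on $\Xset\times\Aset$ the diameter is $\mathsf{D} = \diam(\Xset\times\Aset) = 1$, and for every scale $u\in(0,1)$ an $\varepsilon$-ball of radius $u$ contains only its own center, so the covering number equals $\sn{N}(\Xset\times\Aset,\rho,u) = |\Xset|\,|\Aset|$ on $(0,1)$ and drops to $1$ for $u\ge 1$. Consequently the Dudley integral collapses to
\[
I_{\mathcal D} = \int_0^{1}\sqrt{\log\sn{N}(\Xset\times\Aset,\rho,u)}\,\rmd u = \sqrt{\log(|\Xset|\,|\Aset|)}.
\]

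It then remains to substitute these values into \eqref{eq:prob} and simplify the stochastic error term. With $\mathsf{D}=1$ and $I_{\mathcal D} = \sqrt{\log(|\Xset||\Aset|)}$, the third summand of \eqref{eq:prob} becomes $\bigl(\sqrt{\log(|\Xset||\Aset|)} + \sqrt{\log(1/\delta)}\bigr)/\sqrt{M_1}$; applying the elementary inequality $\sqrt{a}+\sqrt{b}\le\sqrt{2}\,\sqrt{a+b}$ together with $\log(|\Xset||\Aset|) + \log(1/\delta) = \log(|\Xset||\Aset|/\delta)$ bounds it by $\sqrt{2}\,\sqrt{\log(|\Xset||\Aset|/\delta)/M_1}$, with the constant $\sqrt{2}$ absorbed into $\lesssim$. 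Combined with $\rho(\Xset_N,\Xset)=0$, this yields exactly the claimed bound. I do not expect a genuine obstacle here beyond bookkeeping: the only steps requiring care are the correct evaluation of the covering number (and hence of the Dudley integral) under the discrete metric, and checking that the constants hidden in $\lesssim$, which in Theorem~\ref{th: main} depend on $\gamma, L_{\max}, L_\psi, L_\pi, L_0, R_{\max}$, remain finite after these substitutions, which they do by the remark invoked above.
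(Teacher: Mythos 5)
Your proposal is correct and follows essentially the same route as the paper: specialize Theorem~\ref{th: main} to the discrete metric, take $\Xset_N=\Xset$ so that $\rho(\Xset_N,\Xset)=0$, note $\mathsf{D}=1$ and $I_{\mathcal D}\lesssim\sqrt{\log(|\Xset||\Aset|)}$, verify A\ref{ass:X} and A\ref{ass: r-Vpi lip} hold automatically with $L_\psi=1$, $L_{\max}\le R_{\max}$, $L_\pi=R_{\max}/(1-\gamma)$, and merge the two square-root terms. Your explicit treatment of the $L_0$ hypothesis and the covering-number calculation are just slightly more detailed versions of what the paper leaves implicit.
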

\vspace{-0.3cm}
\begin{proof}
The proof is given in Section~\ref{sec:corollaries_proof_main}.
\end{proof}
\vspace{-0.3cm}
\begin{cor}
\label{cor:cor_continuous_main}
Let $\Xset \subseteq [0,1]^\dx$, $|\Aset|< \infty$, and consider $\rho_{\Xset}(x,x') = \|x - x'\|$, $\rho\bigl((x,a), (x',a')\bigr) = \|x - x'\| + \indiacc{a \neq a'}$. Assume that A\ref{ass:Pa} -- A\ref{ass: r-Vpi lip} hold and let $\Xset_N = \{\smallx_1, \ldots, \smallx_N\}$ be a set of $N$ points independently and uniformly distributed over $\Xset$. If additionally $\mathrm{Lip}_{\rho_{\Xset}}(\widehat V^{\rm{up}}_{0}) \leq L_0$ for some $L_0 > 0$, then for any $k \in \nset$ and \(\delta \in (0,1/2)\) it holds with probability at least $1-\delta$ that
\begin{equation*}
\begin{split}
\Vert \widehat{V}_{k}^{{\rm {up}}}-V^{{\rm *}}\Vert _{\Xset} 
&\lesssim \gamma^k\bigl\Vert \widehat{V}_{0}^{{\rm {up}}}-V^{{\rm *}}\bigr\Vert _{\Xset} + \left\Vert V^{\pi}-V^{{\rm *}}\right\Vert _{\Xset} + \sqrt{\frac{\dx \log (\dx |\Aset|/\delta)}{M_1}}\\
&\quad +\sqrt{\dx}\left(N^{-1} \log(1/\delta) \log N \right)^{1/\dx}\,.
\end{split}
\end{equation*}
The precise expression for the constants can be found in \eqref{eq:prob_bound_subset_r_d} in the Appendix.
\end{cor}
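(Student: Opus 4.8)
The plan is to derive Corollary~\ref{cor:cor_continuous_main} directly from Theorem~\ref{th: main} by specializing, to the present setting $\Xset\subseteq[0,1]^{\dx}$, $|\Aset|<\infty$ with $\rho\bigl((x,a),(x',a')\bigr)=\|x-x'\|+\indiacc{a\neq a'}$, the two purely geometric quantities that enter the bound \eqref{eq:prob}: the entropy term $(I_{\mathcal D}+\mathsf{D}\sqrt{\log(1/\delta)})/\sqrt{M_1}$ and the covering radius $\rho(\Xset_N,\Xset)$. Since Theorem~\ref{th: main} already reproduces the first two summands ($\gamma^{k}\|\widehat V_0^{\rm up}-V^\star\|$ and $\|V^\pi-V^\star\|$) verbatim, all the work lies in estimating these last two objects; the first I would treat deterministically, while the second is the genuinely probabilistic part, coming from the random design $\Xset_N$.

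First I would estimate the metric entropy of $\Xset\times\Aset$. Because the action component contributes $\indiacc{a\neq a'}$, two points with distinct actions are at distance at least $1$, so for $u<1$ a $u$-net must resolve the $|\Aset|$ slices $\Xset\times\{a\}$ separately; covering each copy of $[0,1]^{\dx}$ by a regular grid then gives $\log\sn{N}(\Xset\times\Aset,\rho,u)\lesssim \log|\Aset|+\dx\log(c\sqrt{\dx}/u)$. Substituting this into the definition of $I_{\mathcal D}$, using $\sqrt{a+b}\le\sqrt a+\sqrt b$ and $\mathsf{D}=\diam(\Xset\times\Aset)\lesssim\sqrt{\dx}$, I would evaluate the entropy integral by the change of variables $u=\sqrt{\dx}\,t$ together with $\int_0^1\sqrt{\log(1/t)}\,\rmd t<\infty$. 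Collecting the resulting terms and combining them with $\mathsf{D}\sqrt{\log(1/\delta)}\lesssim\sqrt{\dx\log(1/\delta)}$ produces the third summand $\sqrt{\dx\log(\dx|\Aset|/\delta)/M_1}$ of the Corollary.

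The core of the argument is the control of the covering radius $\rho(\Xset_N,\Xset)$ in \eqref{eq: covering radius} for $N$ i.i.d.\ uniform design points. I would partition $[0,1]^{\dx}$ into $m^{\dx}$ congruent subcubes of side $1/m$: if every subcube contains at least one design point, then each $x\in\Xset$ lies within the Euclidean diagonal $\sqrt{\dx}/m$ of some $\smallx_\ell$, so $\rho(\Xset_N,\Xset)\le\sqrt{\dx}/m$. The probability that a fixed subcube is empty is $(1-m^{-\dx})^{N}\le \rme^{-Nm^{-\dx}}$, and a union bound over the $m^{\dx}$ cells shows that some cube is empty with probability at most $m^{\dx}\rme^{-Nm^{-\dx}}$. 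Choosing $m$ so that $Nm^{-\dx}\gtrsim \log N+\log(1/\delta)$ makes this at most $\delta$; solving for $m$ and substituting into $\sqrt{\dx}/m$ yields $\rho(\Xset_N,\Xset)\lesssim\sqrt{\dx}\,(N^{-1}(\log N+\log(1/\delta)))^{1/\dx}$ on an event of probability at least $1-\delta$, where the hypothesis $\delta<1/2$ is exactly what lets me replace the additive factor $\log N+\log(1/\delta)$ by the product $\log N\,\log(1/\delta)$ of the stated bound.

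Finally I would assemble the pieces via a union bound: the covering-radius estimate constrains the random design $\Xset_N$ and holds with probability at least $1-\delta/2$, while Theorem~\ref{th: main}, applied for the given realization of $\Xset_N$ over the independent sampling randomness, holds with probability at least $1-\delta/2$; hence both occur simultaneously with probability at least $1-\delta$. Adding the two bounds and relabelling $\delta/2\mapsto\delta$ (once more invoking $\delta<1/2$ to keep the logarithmic factors clean) gives \eqref{eq:prob_bound_subset_r_d}. I expect the covering-radius concentration to be the main obstacle: obtaining simultaneously the correct $\sqrt{\dx}$ prefactor and the $(N^{-1}\log(1/\delta)\log N)^{1/\dx}$ rate forces a careful balancing of the grid resolution $m$ against a union bound over exponentially many (in $\dx$) cells, and it is there, rather than in the deterministic entropy computation, that the joint dependence on $\dx$, $N$ and $\delta$ must be tracked with care.
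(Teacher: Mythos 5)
Your overall strategy---specialize Theorem~\ref{th: main} by bounding the diameter, the Dudley integral, and the covering radius, then intersect the two probabilistic events---is exactly the paper's, and your explicit union bound over the design randomness and the sampling randomness is if anything more careful than the paper's proof, which simply substitutes the covering-radius estimate into \eqref{eq:Markov_inequality_explicit_constant}. The one genuinely different ingredient is the covering radius. The paper's Proposition~\ref{prop: covering rad} takes a maximal $1/n$-separated set, bounds its cardinality by a ball-volume argument, shows a large covering radius forces some small ball around a net point to miss all $N$ samples, derives the moment bound $\EE{\rho^p(\Xsetn,\Xset)}^{1/p}\lesssim \sqrt{\dx}(p\log N/N)^{1/\dx}$, and converts it to a high-probability statement by Markov with $p\asymp\log(1/\delta)$, where the product $\log(1/\delta)\log N$ appears automatically. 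Your grid partition into $m^{\dx}$ subcubes is more elementary, works directly at the probability level, and yields the slightly sharper additive form $((\log N+\log(1/\delta))/N)^{1/\dx}$, which you correctly relax to the stated product using $\delta<1/2$. The only bookkeeping not to skip: the union bound is over $m^{\dx}$ cells, so the actual requirement is $Nm^{-\dx}\gtrsim \dx\log m+\log(1/\delta)$, which your condition $Nm^{-\dx}\gtrsim\log N+\log(1/\delta)$ covers only because the relevant $m$ satisfies $m^{\dx}\lesssim N$. What the paper's route buys is the reusable $L^p$ bound \eqref{cov rad lp}; what yours buys is brevity.

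One caveat, shared with the paper: the entropy term. With $\log\sn{N}(\Xset\times\Aset,\rho,u)\lesssim\log|\Aset|+\dx\log(c\sqrt{\dx}/u)$ and your substitution $u=\sqrt{\dx}\,t$, the integral $\int_0^{c\sqrt{\dx}}\sqrt{\dx\log(c\sqrt{\dx}/u)}\,\rmd u$ evaluates to order $\dx\int_0^1\sqrt{\log(c/t)}\,\rmd t\asymp\dx$, not $\sqrt{\dx\log\dx}$; Dudley's integral over $[0,1]^{\dx}$ with the Euclidean metric is genuinely of order $\dx$. So the computation you describe delivers a third summand of order $\dx/\sqrt{M_1}+\sqrt{\dx\log(|\Aset|/\delta)/M_1}$ rather than the displayed $\sqrt{\dx\log(\dx|\Aset|/\delta)/M_1}$. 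The paper asserts $I_{\mathcal D}\lesssim\sqrt{\dx\log|\Aset|}+\sqrt{\dx\log\dx}$ with ``it is easy to see'' and has the same gap, so this does not distinguish your argument from the paper's; but you should not claim that the change of variables ``produces the third summand'' as stated---it produces a bound that is polynomially worse in $\dx$, and the corollary's display would need to be adjusted accordingly (or a sharper chaining argument supplied).
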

\vspace{-0.3cm}
\begin{proof}
The proof is given in Section~\ref{sec:corollaries_proof_main}.
\end{proof}
\vspace{-0.3cm}
\paragraph{Variance of the estimator and confidence bounds.} Our next step is to bound the variance of the estimator $\widehat{V}_{k}^{{\rm {up}}}(x)$. We additionally assume that $\Xset \times \Aset$ is a parametric class with the metric entropy satisfying the following assumption:
\begin{assum}
\label{ass: covering number}
There exist a constant $C_{\Xset, \Aset} > 1$ such that for any $\varepsilon \in (0,\mathsf{D})$,
$$
\log \sn{N}(\Xset \times \Aset, \rho, \varepsilon) \le C_{\Xset, \Aset} \log (1+1/\varepsilon).
$$
\end{assum}

Denote the r.h.s. of \eqref{eq:prob} by $\sigma_k$, that is,
\begin{equation}
\label{eq:sigma_k_def}
\sigma_k \eqdef \gamma^k\bigl\Vert \widehat{V}_{0}^{{\rm {up}}}-V^{{\rm *}}\bigr\Vert_{\Xset} + \left\Vert V^{\pi}-V^{{\rm *}}\right\Vert _{\Xset} +\frac{I_{\mathcal D} +  \mathsf{D}}{\sqrt{M_1}}  + \rho(\Xset_N, \Xset)\,.
\end{equation}
The next theorem implies that $\mathsf{Var}\bigl[\widehat{V}_{k}^{{\rm {up}}}(x)\bigr]$ can be much smaller than the standard rate \(1/M_2,\) provided that \(V^{\pi}\) is close to \(V^{*}\) and \(M_1,N,K\) are large enough. 
\begin{thm}
\label{th: variance}
Let A\ref{ass:X} -- A\ref{ass: covering number} hold and assume additionally $\mathrm{Lip}_{\rho_{\Xset}}(\widehat V^{\rm{up}}_{0}) \leq L_0$ for some $L_0 > 0$. Then
\begin{equation}
\label{eq:var-bound}
\max_{x\in\Xset} \mathsf{Var}\bigl[\widehat{V}_{k}^{{\rm {up}}}(x)\bigr]\leq  
\ConstC \sigma_k^2 \log(\rme \vee \sigma_k^{-1}) M_2^{-1}\,,
\end{equation}
where the constant $\ConstC$ depends on $C_{\Xset, \Aset}, \gamma, L_{\max}, L_{\psi}, L_{\pi}, L_{0}$ and $R_{\max}$. A precise expression for $\ConstC$ can be found in \eqref{eq:Var_final_bound} in the Appendix.
\end{thm}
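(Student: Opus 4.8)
The plan is to reduce the variance bound to a uniform concentration statement about a single iteration of Algorithm~\ref{alg main}, and then to exploit the contraction in $k$ to propagate the estimate through the iterates. Fix a design point $x = \smallx_i \in \Xset_N$ and recall that, conditionally on the previous iterate $\widehat V_k^{\rm{up}}$ and on the $M_1$-block used to form the control variate $\overline V^{(i,a)}$, the quantity $\widetilde V_{k+1}^{\rm{up}}(\smallx_i)$ is an empirical average over the independent $M_2$-block of the random variables
\[
Z_j^{(i)} \eqdef \max_{a\in\Aset}\bigl\{r^a(\smallx_i)+\gamma\bigl(\widehat V_k^{\rm{up}}(Y_j^{\smallx_i,a}) - V^\pi(Y_j^{\smallx_i,a}) + \overline V^{(i,a)}\bigr)\bigr\}, \quad j = M_1+1,\dots,M_1+M_2.
\]
Since these $M_2$ summands are i.i.d.\ conditionally on the $M_1$-block, the conditional variance of $\widetilde V_{k+1}^{\rm{up}}(\smallx_i)$ equals $M_2^{-1}\Var[Z^{(i)} \mid \cdot]$, which is the source of the $M_2^{-1}$ factor in \eqref{eq:var-bound}. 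The crux is therefore to show that $\Var[Z^{(i)}\mid\cdot]$ is controlled by $\ConstC\,\sigma_k^2\log(\rme\vee\sigma_k^{-1})$.

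The key observation driving the small-variance phenomenon is the almost-sure identity \eqref{eq:bellman-as}: with the exact martingale increment built from $V^\star$, the maximand inside $Z^{(i)}$ would be the deterministic constant $V^\star(\smallx_i)$, so its variance would vanish. The plan is to write $Z^{(i)}$ as this deterministic value plus a fluctuation term governed by the two discrepancies $\widehat V_k^{\rm{up}} - V^\star$ and $V^\pi - V^\star$ (the latter entering through the difference between the control variate $-V^\pi(Y_j^{\smallx_i,a})+\overline V^{(i,a)}$ and the ideal increment $-V^\star(Y_j^{\smallx_i,a})+\MK^aV^\star(\smallx_i)$). Using the Lipschitz property of $\max_a$ together with A\ref{ass: r-Vpi lip}, I would bound the oscillation of $Z^{(i)}$ by a sum of (i) $\|\widehat V_k^{\rm{up}}-V^\star\|_\Xset$, (ii) a term involving $\|V^\pi-V^\star\|_\Xset$, and (iii) the empirical fluctuation $\overline V^{(i,a)} - \MK^aV^\pi(\smallx_i)$ of the $M_1$-average. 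This is exactly the structure encoded in $\sigma_k$: Theorem~\ref{th: main} furnishes the high-probability control of term (i) by $\sigma_k$, and the Monte Carlo error of the control variate is handled by the Dudley/covering-number machinery already introduced before Theorem~\ref{th: main}, now specialized through A\ref{ass: covering number} to the parametric rate $\log\sn N \lesssim C_{\Xset,\Aset}\log(1+1/\varepsilon)$.

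The technically delicate part, and what I expect to be the main obstacle, is converting these pathwise oscillation bounds into a variance bound with the sharp $\sigma_k^2\log(\rme\vee\sigma_k^{-1})$ dependence rather than the naive $\sigma_k^2$ or a constant. Because $Z^{(i)}$ involves a maximum over $a\in\Aset$ of random fields indexed by $\Xset\times\Aset$, its deviation is subgaussian only after paying a chaining cost, and the variance of a maximum of correlated subgaussian variables picks up the logarithmic factor $\log(\rme\vee\sigma_k^{-1})$ once the per-coordinate scale is $\sigma_k$; this is where A\ref{ass: covering number} enters, ensuring the Dudley integral $I_{\mathcal D}$ is finite and the chaining bound scales correctly. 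Concretely, I would establish a conditional subgaussian tail for the centered maximand with proxy variance $\propto\sigma_k^2$, integrate the tail to obtain the variance, and track the covering-number constant $C_{\Xset,\Aset}$ through the chaining estimate.

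Finally, the bound must hold uniformly over all $x\in\Xset$, not just the design points. The plan is to combine the variance control at the points $\smallx_i$ with the fact that $\widehat V_k^{\rm{up}}=I[\widetilde V_k^{\rm{up}}]$ is a Lipschitz central interpolant: since the interpolation operator $I[\cdot]$ is a deterministic Lipschitz map applied to the values at the design points, the variance at a general $x$ is controlled by the variances at nearby design points, with the covering radius $\rho(\Xset_N,\Xset)$ absorbing the interpolation error (already folded into $\sigma_k$ via its last summand). Assembling these pieces and tracking constants through A\ref{ass: r-Vpi lip} and A\ref{ass: covering number} yields \eqref{eq:var-bound} with the explicit constant recorded in \eqref{eq:Var_final_bound}.
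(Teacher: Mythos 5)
Your proposal follows essentially the same route as the paper's proof: the paper likewise writes $\widetilde V_{k+1}^{\rm up}$ as a conditionally centered average over the $M_2$-block plus its conditional mean, uses the almost-sure Bellman identity \eqref{eq:bellman-as} to bound the envelope of the centered summands by $R_k^\star\lesssim\sigma_k$ (combining the $L^p$ bound \eqref{eq:lp} with Proposition~\ref{prop: emp proc}), applies Dudley chaining under A\ref{ass: covering number} to the process $x\mapsto T_k^x$ of diameter $\lesssim\sigma_k$ to extract the factor $\sigma_k^2\log(\rme\vee\sigma_k^{-1})M_2^{-1}$, and passes to all of $\Xset$ through the Lipschitz interpolant. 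The only cosmetic difference is that the paper bounds the variance at a general $x$ directly by $\PE[\sup_{x\in\Xset}|T_k^x|^2]$ (rather than per design point), which is exactly the expected squared supremum your chaining step would produce.
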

\vspace{-0.3cm}
\begin{proof}
The proof is given in Section~\ref{sec:proof_variance_bound}.
\end{proof}
\vspace{-0.3cm}

\begin{cor}
Recall that $\widehat{V}_{k}^{\rm{up}}$ is an upper-biased estimate of $V^{\star}$ in the sense that \( \overline{V}_{k}^{\rm{up}}(x)\geq V^{\star}(x)\) provided $\widehat{V}_{0}^{\rm{up}}(x) \geq V^\star(x)$ for $x \in \Xsetn$. Together with Theorem~\ref{th: variance}, it implies that for any $\delta \in (0,1)$, with probability at least $1-\delta$,
\begin{multline}
\label{eq:conf-bounds}
V^{\pi}(x)\leq V^\star(x)\leq \widehat{V}_{k}^{{\rm {up}}}(x)
\\
+ \sigma_k \sqrt{\ConstC \log(\rme \vee \sigma_k^{-1}) \delta^{-1} M_2^{-1}}+L_V\rho(\Xset_N, \Xset)\indiacc{x \not\in  \Xset_N},\quad x\in\Xset,
\end{multline}
where the constant $L_V$ is given by \eqref{eq:L_V_definition} in the Appendix.
\end{cor}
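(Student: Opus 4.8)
The plan is to turn the upper-biasedness of $\widehat{V}_{k}^{\rm{up}}$ into a one-sided high-probability inequality by combining it with the variance bound of Theorem~\ref{th: variance} through Chebyshev's inequality, and then to propagate this bound from the design points $\Xset_N$ to an arbitrary $x\in\Xset$ using the Lipschitz continuity of the interpolant. The left inequality $V^{\pi}(x)\le V^\star(x)$ needs no argument: it is the defining property of the optimal value function, so all the work lies in the upper bound.

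First I would fix a design point $x\in\Xset_N$. By the upper-biasedness recalled in the statement, $V^\star(x)\le\overline{V}_{k}^{\rm{up}}(x)=\mathsf{E}[\widehat{V}_{k}^{\rm{up}}(x)]$. Applying the one-sided Chebyshev inequality to $\widehat{V}_{k}^{\rm{up}}(x)$ gives, for every $t>0$,
\[
\Prob{\overline{V}_{k}^{\rm{up}}(x)-\widehat{V}_{k}^{\rm{up}}(x)>t}\le \mathsf{Var}[\widehat{V}_{k}^{\rm{up}}(x)]\, t^{-2}.
\]
Choosing $t=\sqrt{\mathsf{Var}[\widehat{V}_{k}^{\rm{up}}(x)]/\delta}$ makes the right-hand side equal to $\delta$, so that on an event of probability at least $1-\delta$,
\[
V^\star(x)\le\overline{V}_{k}^{\rm{up}}(x)\le\widehat{V}_{k}^{\rm{up}}(x)+\sqrt{\mathsf{Var}[\widehat{V}_{k}^{\rm{up}}(x)]/\delta}.
\]
Substituting the variance bound $\max_{x}\mathsf{Var}[\widehat{V}_{k}^{\rm{up}}(x)]\le\ConstC\,\sigma_k^2\log(\rme\vee\sigma_k^{-1})M_2^{-1}$ from Theorem~\ref{th: variance} reproduces exactly the confidence term $\sigma_k\sqrt{\ConstC\log(\rme\vee\sigma_k^{-1})\delta^{-1}M_2^{-1}}$, while the indicator $\indiacc{x\notin\Xset_N}$ is zero, completing this case.

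Next I would treat a fixed $x\notin\Xset_N$ by selecting a nearest design point $\smallx_\ell$, for which $\rho_\Xset(x,\smallx_\ell)\le\rho(\Xset_N,\Xset)$. Running the previous argument at $\smallx_\ell$ yields $V^\star(\smallx_\ell)\le\widehat{V}_{k}^{\rm{up}}(\smallx_\ell)+\sqrt{\mathsf{Var}[\widehat{V}_{k}^{\rm{up}}(\smallx_\ell)]/\delta}$ with probability at least $1-\delta$. I would then close the remaining gap with two Lipschitz estimates, namely $V^\star(x)\le V^\star(\smallx_\ell)+\mathrm{Lip}_{\rho_\Xset}(V^\star)\,\rho_\Xset(x,\smallx_\ell)$ and $\widehat{V}_{k}^{\rm{up}}(\smallx_\ell)\le\widehat{V}_{k}^{\rm{up}}(x)+\mathrm{Lip}_{\rho_\Xset}(\widehat{V}_{k}^{\rm{up}})\,\rho_\Xset(x,\smallx_\ell)$, where the second uses that the central interpolant of \eqref{eq:interp} preserves the Lipschitz constant. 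Adding these and bounding $\rho_\Xset(x,\smallx_\ell)$ by the covering radius $\rho(\Xset_N,\Xset)$ produces the extra term $L_V\rho(\Xset_N,\Xset)$ with $L_V=\mathrm{Lip}_{\rho_\Xset}(V^\star)+\sup_{k}\mathrm{Lip}_{\rho_\Xset}(\widehat{V}_{k}^{\rm{up}})$, matching \eqref{eq:L_V_definition}.

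I expect the main obstacle to be the control of $\mathrm{Lip}_{\rho_\Xset}(\widehat{V}_{k}^{\rm{up}})$ uniformly in $k$, so that $L_V$ stays finite and does not grow as the iteration proceeds; this requires checking that the approximate UVIP recursion followed by the interpolation step does not inflate the Lipschitz seminorm past a fixed bound depending only on $\gamma,L_{\max},L_{\psi},L_{\pi},L_0$. This uniform Lipschitz control is, however, exactly what is already set up in the proof of Theorem~\ref{th: main}, so I would invoke it rather than reprove it. The Lipschitz continuity of $V^\star$ used above follows from A\ref{ass: r-Vpi lip} together with the contraction factor $\gamma L_\psi<1$, yielding $\mathrm{Lip}_{\rho_\Xset}(V^\star)\le L_{\max}/(1-\gamma L_\psi)$ by a standard fixed-point argument applied to the Bellman optimality operator.
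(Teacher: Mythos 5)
Your argument is correct and is exactly the route the paper intends for this corollary (which it states without a separate proof): upper-biasedness $\overline{V}_{k}^{\rm{up}}(\smallx_\ell)\geq V^\star(\smallx_\ell)$ at the design points, a pointwise Chebyshev bound using the variance estimate of Theorem~\ref{th: variance}, and a Lipschitz transfer from the nearest design point to a general $x\in\Xset$. The one discrepancy is cosmetic: your constant in front of $\rho(\Xset_N,\Xset)$ is $\mathrm{Lip}_{\rho_\Xset}(V^\star)+L_V$ rather than the paper's $L_V$, but since $\mathrm{Lip}_{\rho_\Xset}(V^\star)\leq L_{\max}/(1-\gamma L_{\psi})\leq L_V$ this changes the constant by at most a factor of $2$, and your bookkeeping of the $V^\star$-variation term is arguably the more careful one.
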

Note that bounds of the type \eqref{eq:conf-bounds} are known in the literature only in the case of specific policies $\pi$. For example, \cite{wainwright2019} proves bounds of this type for greedy policies in tabular $\mathsf{Q}$-learning. At the same time, \eqref{eq:conf-bounds} holds for an arbitrary policy $\pi$ and a general state space. 
\par
Now we aim to track the dependence of the r.h.s. of \eqref{eq:conf-bounds} on the quantity $\left\Vert V^{\pi}-V^\star \right\Vert_{\Xset}$ for MDPs with finite state and action spaces. The following proposition implies that $\sigma_k$ scales (almost) linearly with $\left\Vert V^{\pi}-V^\star\right\Vert _{\Xset}$.
\begin{prop}
\label{prop:sigma_k_bound}
Let $|\Xset|, |\Aset| < \infty$, assume A\ref{ass:Pa}, A\ref{ass: r-Vpi}, and $\bigl\Vert \widehat{V}_{0}^{{\rm {up}}} \bigr\Vert_{\Xset} \leq R_{\max}(1-\gamma)^{-1}$. Then for $k$ and $M_1$ large enough, it holds that 
\begin{equation}
\label{eq:upper_bound_scaling}
\sigma_k \lesssim \left\Vert V^{\pi}-V^\star \right\Vert _{\Xset} \,.
\end{equation}
The precise bounds for $k$ and $M_1$ can be found in \eqref{eq:k_M_1_bound_appendix}.
\end{prop}
\begin{proof}
The proof is given in Section~\ref{sec:proof_prop_5_1}.
\end{proof}
\vspace{-4ex}
\section{Numerical Results}
\label{sec:num}
In this section, we demonstrate the performance of Algorithm~\ref{alg main} on several tabular and continuous state-space RL problems. Recall that the closer the policy \(\pi\) is to the optimal one \(\pi^\star\), the smaller is the difference between \(V^{\pi}(x)\) and \(V^{\rm{up},\pi}(x)\).
\vspace{-0.3cm}
\paragraph{Discrete state-space MDPs}
We consider $3$ popular tabular environments: Garnet (\cite{10.2307/2584329}), Chain (\cite{rowland2020conditional}) and NRoom (\cite{rlberry}). Detailed descriptions of these environments are provided in Appendix~\ref{sec:envs}. For each environment, we perform $K$ updates of the Value iteration (see Appendix~\ref{sec:envs} for details) with known transition kernel $\MK^a$. We denote the $k$-th step estimate of the action-value function by $\hat{Q}_k(x,a)$ and denote by $\pi_k$ the greedy policy w.r.t. $\hat{Q}_k(x,a)$. Then we evaluate the policies $\pi_k$ with Algorithm~\ref{alg main} for certain iteration numbers $k$. We omit the approximation step because the state space is small. Experimental details are provided in Table~\ref{tab:parameters} in the appendix. Figure~\ref{fig:exp12} displays the gap between $V^{\pi_k}(x)$ and $V^{\rm{up},\pi_k}(x)$, which converges to zero as $\pi^k$ approaches the optimal policy $\pi^\star$. 

In the NRoom environment, we first learn a suboptimal policy $\pi$ using the Value Iteration (VI) algorithm. In the third room, we then replace this policy with a uniformly random policy $\pi_c$ with probability $1/2$. As expected, this modification results in a less efficient policy within that specific room, which, in turn, should increase the upper bounds of our estimation. To demonstrate this effect, we compute precise upper bounds using the UVIP algorithm. As shown in Figure~\ref{fig:exp12}(bottom), UVIP effectively captures the suboptimality of the policy in the third room, while displaying only slight changes in value estimates for the other rooms.

\vspace{-4ex}
\begin{figure}[H]
    \centering
    \subfigure{\includegraphics[width=0.15\textwidth]{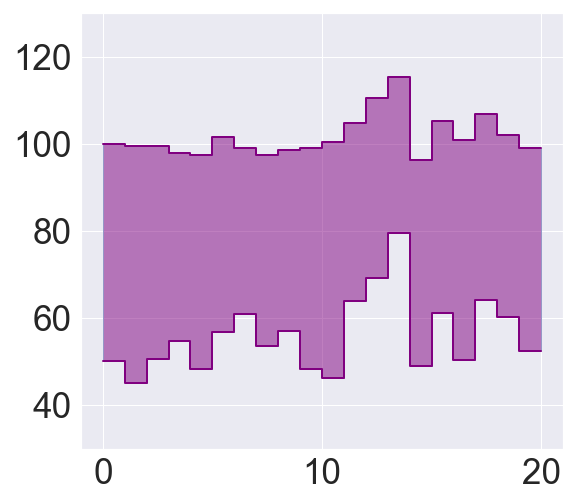}}
    \hspace{-1ex}
    \subfigure{\includegraphics[width=0.15\textwidth]{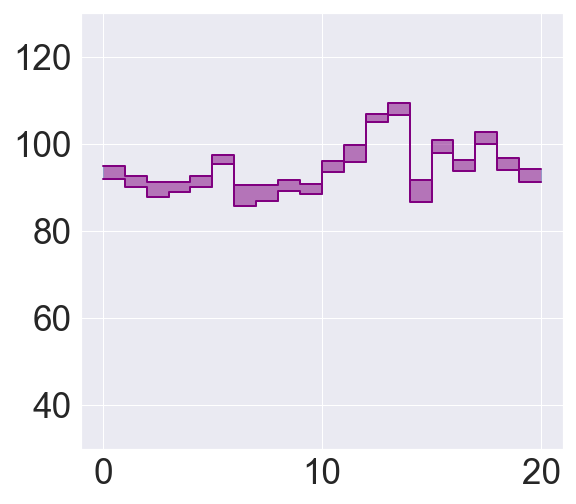}} 
    \hspace{-1ex}
    \subfigure{\includegraphics[width=0.15\textwidth]{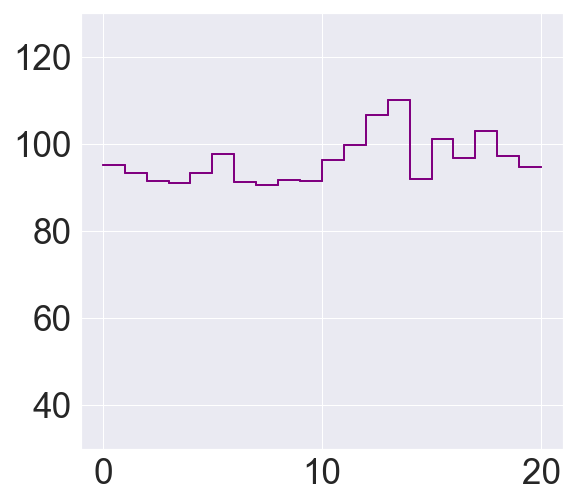}}
    \hspace{2ex}
    \subfigure{\includegraphics[width=0.15\textwidth]{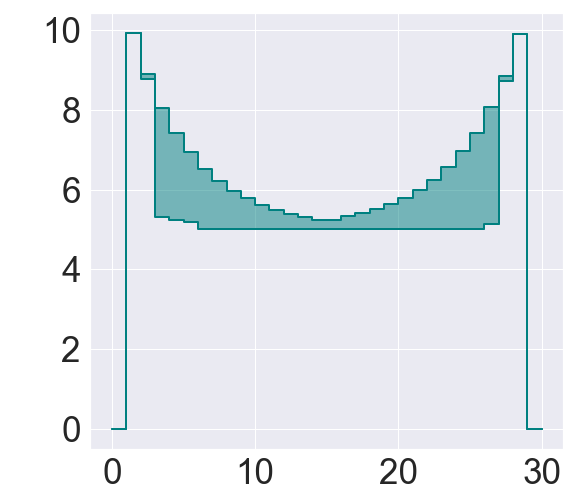}}
    \hspace{-1ex}
    \subfigure{\includegraphics[width=0.15\textwidth]{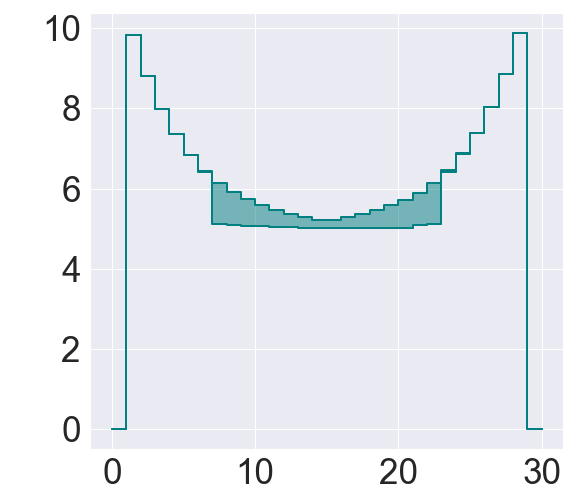}}
    \hspace{-1ex}
    \subfigure{\includegraphics[width=0.15\textwidth]{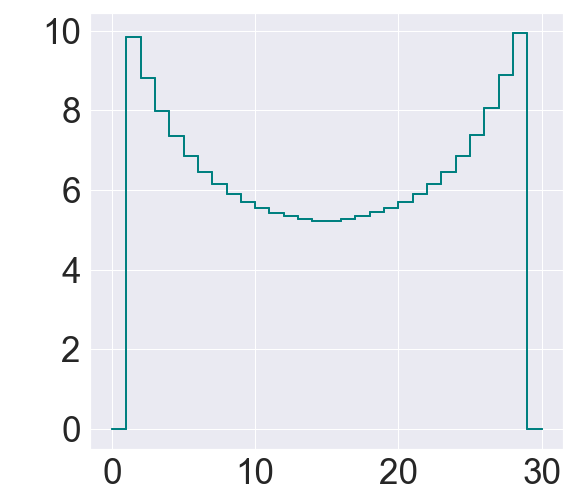}}

    \subfigure{\includegraphics[width=0.45\textwidth]{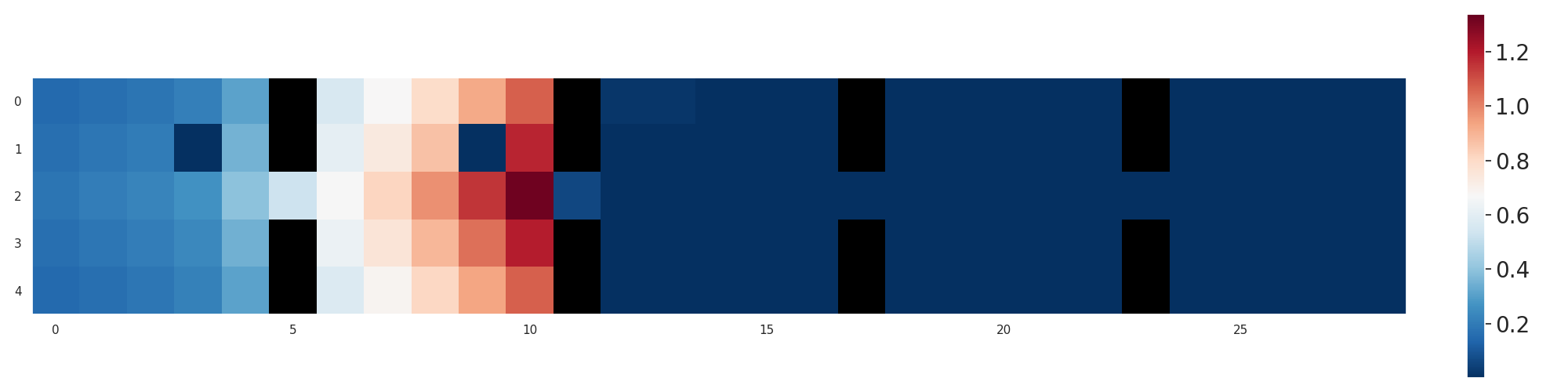}}
    \hspace{2ex}
    \subfigure{\includegraphics[width=0.45\textwidth]{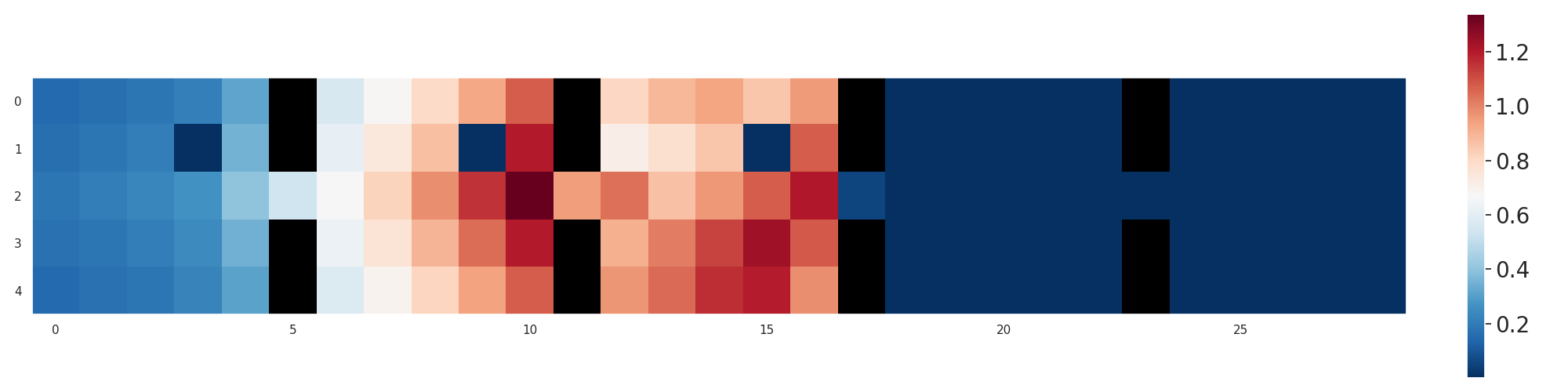}}
    
    \caption{The difference between $V^{\rm{up},\pi_i}(x)$ and $V^{\pi_i}(x)$. The x-axis represents states in a discrete environment for all pictures. Each group of three pictures of the same color illustrates the process of learning the policy from the first iteration to the last.
    \emph{First row}: Evaluation of the policies during the process of Value Iteration for Garnet (left) and Chain environments (right). The policies are the greedy ones corresponding to the function \(Q_i(x, a)\) at the $i$-th step. \emph{Second row}: Comparison of the gap between $V^{\pi}$ and $V^{\rm{up}, \pi}$ for the learned policy $\pi$ and the corrupted policy $\pi_{c}$ in the NRoom environment. The color in these plots represents the value of $V^{\rm{up}, \pi} - V^{\pi}$.}
    \label{fig:exp12}
\end{figure}

\vspace{-4ex}

\paragraph{Continuous state-space MDPs}
In all subsequent experiments, we obtain sample points $(\smallx_1, \dots, \smallx_N)$ in Algorithm~\ref{alg main} from trajectories of the evaluation policy. These points are sufficiently representative (see \cite{kveton2012kernel}, \cite{barreto2016practical}) and explore key areas of the state space. We consider the OpenAI Gym CartPole and Acrobot environments (see \cite{1606.01540}), with their descriptions provided in Appendix~\ref{sec:envs}. For CartPole, we evaluate the A2C algorithm policy $\pi_1$ (\cite{pmlr-v48-mniha16}), the linear deterministic policy (LD) $\pi_2$ described in Appendix~\ref{sec:envs}, and a random uniform policy $\pi_3$. Figure~\ref{fig:exp3} (left) indicates the superior quality of $\pi_2$, a certain instability introduced by A2C training in $\pi_1$, and the low quality of $\pi_3$. We also evaluate a policy for Acrobot given by A2C, as well as a policy from Dueling DQN (\cite{wang2016dueling}) (Fig.~\ref{fig:exp3} (right)). From the plots, we can conclude that both policies are good but far from optimal.
\vspace{-4ex}

\begin{figure}[H]
    \centering
    \subfigure{
        \includegraphics[scale=0.18]{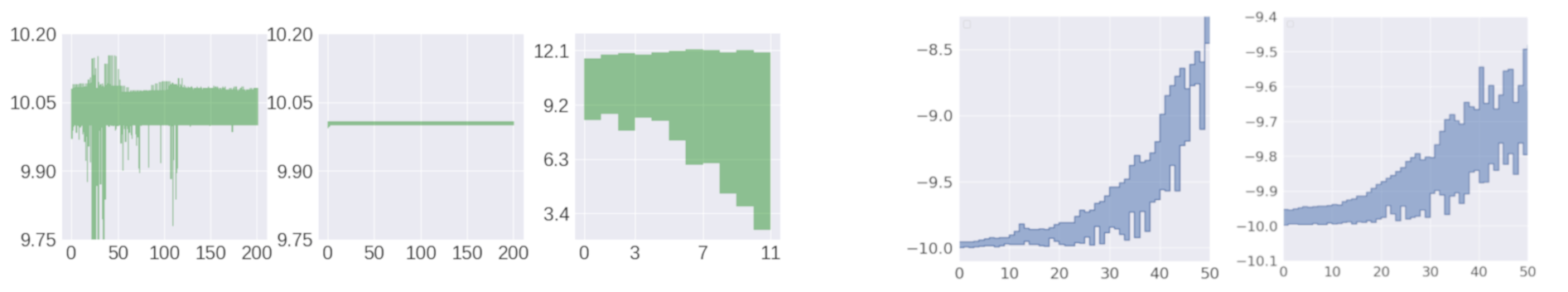}
    }
    \caption{Upper and lower bounds for three different policies. \emph{Left:} For CartPole $\pi_1$, $\pi_2$, $\pi_3$ policies, respectively. For the horizontal axis, we sample a single trajectory according to the policy. \emph{Right:} For Acrobot Dueling DQN and A2C policies, respectively. We evaluate the bounds for the first 50 states of the trajectory for each algorithm.}
    \label{fig:exp3}
\end{figure}
\vspace{-4ex}

Additionally, we compare policies in the TwinRooms environment from the \textit{rlberry} (\cite{rlberry}) library. We obtain two policies $\pi_1$ and $\pi_2$ after running the Kernel-UCBVI (\cite{domingues2022kernelbasedreinforcementlearningfinitetime}) algorithm for $2500$ and $5000$ iteration steps, respectively. The results in Figure~\ref{fig:expTwinRooms} show that after $5000$ learning steps the policy $\pi_2$ has a tighter gap between the lower bound $V^{\pi}$ and the upper bound $V^{\rm{up},\pi}$ on the optimal value function. Also, our upper bounds highlight the regions of the state space that are less studied with our policy.
\vspace{-4ex}
\begin{figure}[H]
    \centering
    \subfigure{\includegraphics[width=0.45\textwidth, height=2.3cm]{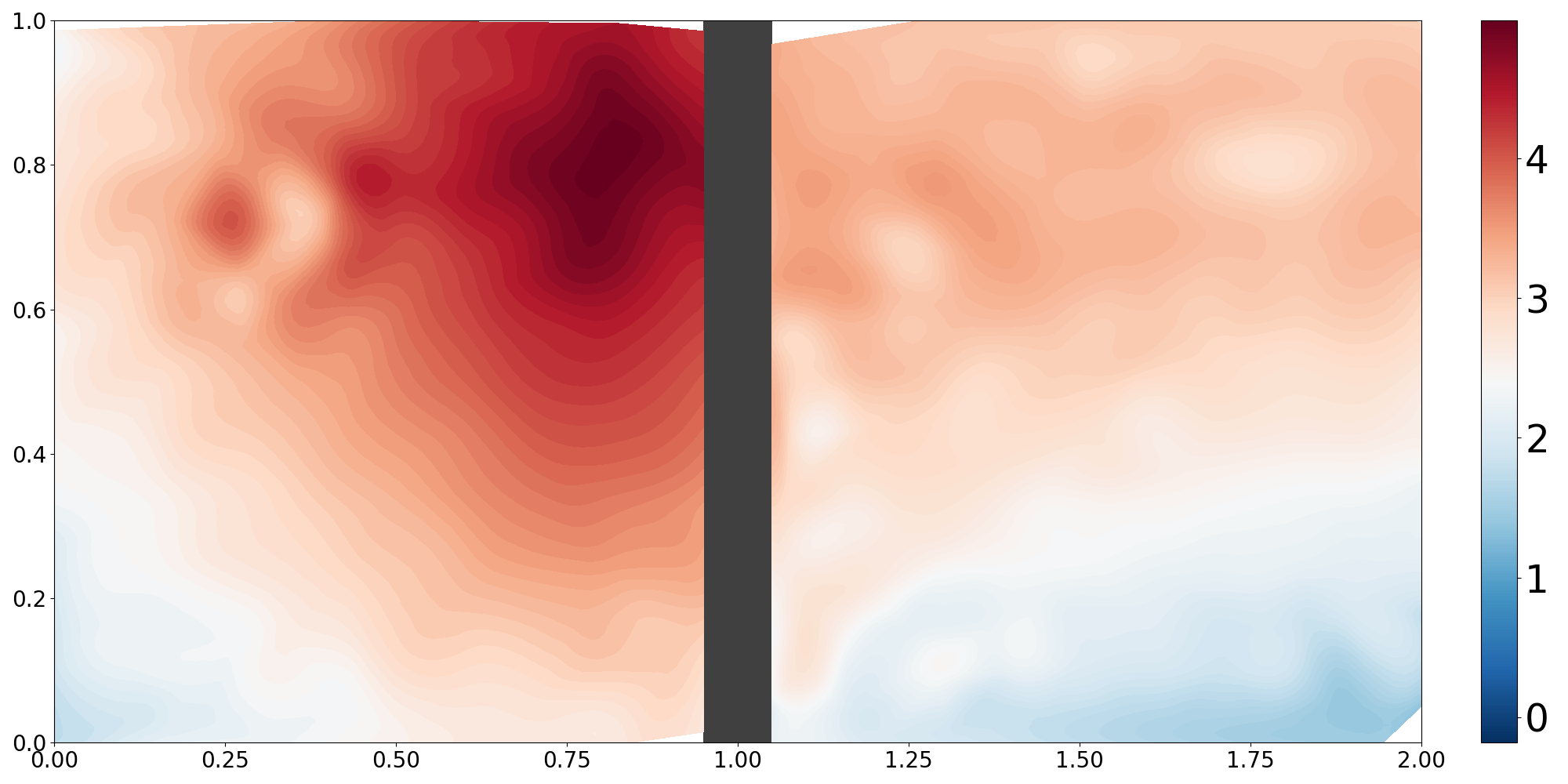}}
    \hspace{2ex}
    \subfigure{\includegraphics[width=0.45\textwidth, height=2.3cm]{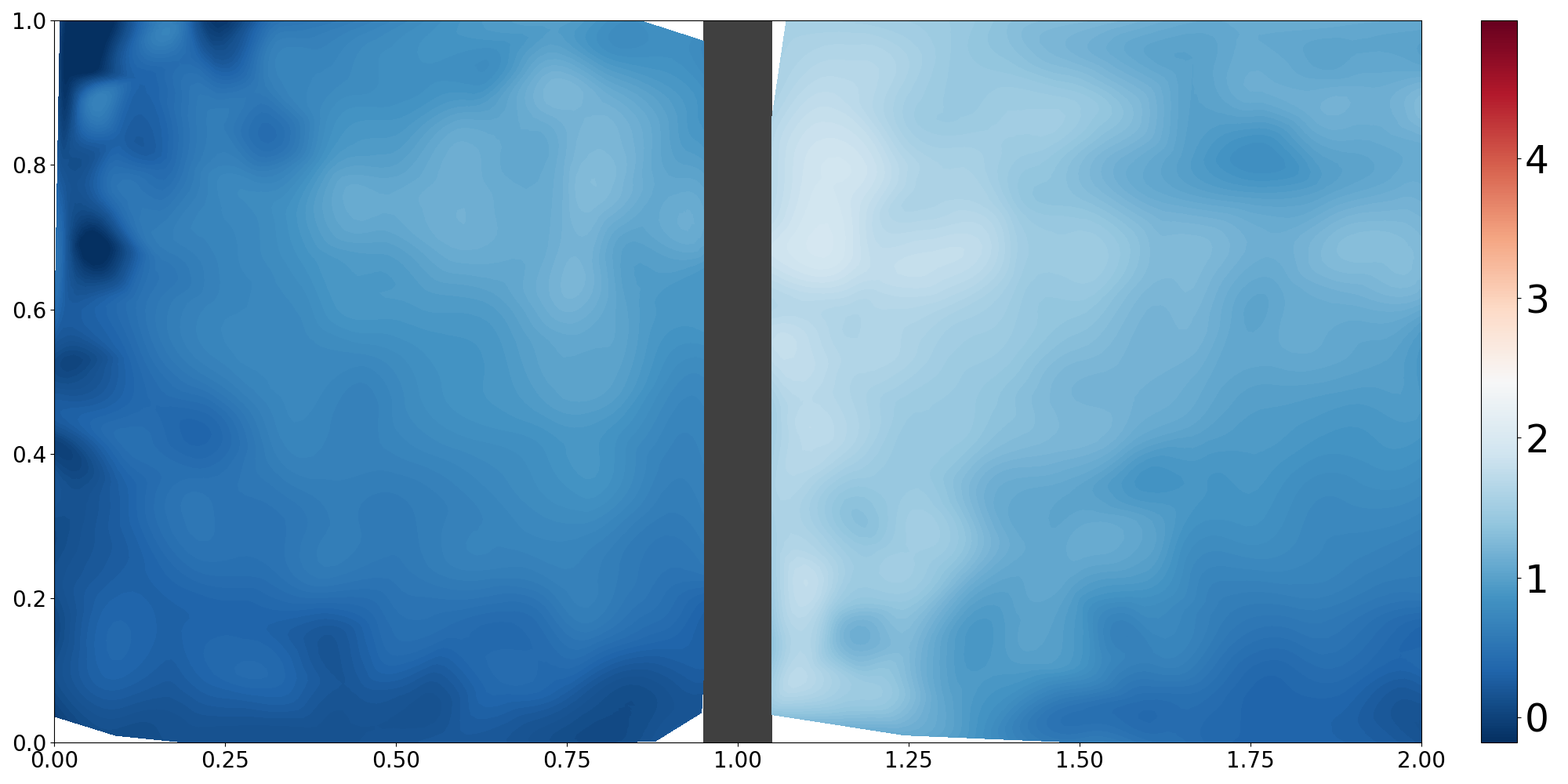}} 
    
    \caption{We illustrate the gap between $V^{\rm{up}, \pi}$ and $V^{\pi}$ in the TwinRooms environment. The color in these plots represents the value of $V^{\rm{up}, \pi} - V^{\pi}$. On the left and right, we show this quantity for $\pi_1$ and $\pi_2$, respectively. We obtain $\pi_1$ and $\pi_2$ after $2500$ and $5000$ learning steps of the Kernel-UCBVI algorithm.}
    \label{fig:expTwinRooms}
\end{figure}
\vspace{-6ex}

\paragraph{Running time of the UVIP algorithm}
To demonstrate that the construction of upper bounds is computationally efficient, we compare the UVIP algorithm with the value function estimation algorithm, Fitted Q-Evaluation (FQE). Specifically, we focus on the running time of these algorithms, ensuring a common convergence criterion is applied to both. Let ${V_k}$ represent the estimates of the lower or upper bound at the $k$-th step of the FQE or UVIP algorithms. Additionally, we select a set of sample points $\Xset_N$ at which convergence will be measured. To this end, we define the quantity
    \begin{align*}
        E_k = \sqrt{\sum\limits_{x \in \Xset_N} \left(1 - {V_{k-1}(x) \over V_k(x)}\right)^2 }\,.
    \end{align*}
We stop iterations of both FQE and UVIP procedures when $E_k \leq 0.01$. After training the policy for $K$ steps, we construct an $\varepsilon$-greedy policy and evaluate it using both algorithms. Results on the TwinRooms environment are summarized in Table~\ref{tab:running_time}. While the UVIP algorithm converges for all instances, it sometimes exhibits significant variance. In contrast, the FQE algorithm fails to converge within the fixed budget for certain seeds. Both algorithms show similar performance in this task.

\vspace{-4ex}
\begin{table}[H]
\caption{Comparison of running time (in seconds) between the UVIP and FQE algorithms on the TwinRooms environment. The evaluation is conducted for different policies after a specified number of training steps of the Kernel-UCBVI algorithm (represented in each row). The reported running times are averaged over 5 seeds. In some instances, the FQE algorithm does not converge within the fixed budget of 400 epochs. Therefore, results are presented separately: one set includes all seeds, while another considers only the converged cases.}
\begin{center}
\begin{tabular}{|c|c|c|c|}
\hline
policy training steps & FQE all seeds & FQE converged & UVIP \\ \hline
$1250$ & $73.60 \pm 13.26$ & $73.60 \pm 13.26$ & $49.23 \pm 66.29$ \\ \hline
$2500$ & $120.85 \pm 94.54$ & $65.56 \pm 14.24$ & $78.36 \pm 77.54$ \\ \hline
$5000$ & $110.06 \pm 85.35$ & $87.06 \pm 65.36$ & $82.26 \pm 107.77$ \\ \hline
\end{tabular}
\end{center}
\label{tab:running_time}
\end{table}

\vspace{-6ex}
\section{Conclusion and Future Work}
\label{sec:conclusion}
In this work, we propose a new approach towards model-free evaluation of the agent's policies in RL, based on upper solutions to the Bellman optimality equation \eqref{eq:Bellman_optimality_equation}. To the best of our knowledge, {\sf UVIP} is the first procedure that allows us to construct non-asymptotic confidence intervals for the optimal value function $V^\star$ based on the value function corresponding to an arbitrary policy $\pi$. In our analysis, we consider only infinite-horizon MDPs and assume that sampling from the conditional distribution \(\MK^{a}(\cdot | x)\) is feasible for any \(x\in \Xset\) and \(a\in \Aset.\) A promising future research direction is to generalize {\sf UVIP} to the case of finite-horizon MDPs by combining it with the idea of real-time dynamic programming (see \cite{efroni2019tight}).
Moreover, plain Monte Carlo estimates are not necessarily the most efficient way to estimate the outer expectation in Algorithm~$1$. Other stochastic approximation techniques could also be applied to approximate the solution of \eqref{eq:v-up}.

\vspace{-2ex}
\section*{Acknowledgments}
This work was supported by the grant for research centers in the field of AI provided by the Ministry of Economic Development of the Russian Federation in accordance with the agreement 000000C313925P4E0002 and the agreement with HSE University № 139-15-2025-009. This research was supported in part through computational resources of HPC facilities at HSE University \cite{kostenetskiy2021hpc}.

\bibliography{biblio-rl}
\bibliographystyle{spmpsci}

\appendix
\section{Proof of the Main Results}
\label{sec:proof_of_main_res}

Throughout this section we will use additional notation. Let $\psi_2(x) = e^{x^2} - 1$, $x \in \rset$. For r.v. $\eta$ we denote $\| \eta \|_{\psi_2} \eqdef \inf\{t > 0: \EE{\exp\{\eta^2/t^2\}} \le 2\}$ the Orlicz 2-norm. We say that $\eta$ is a \emph{sub-Gaussian random variable} if $\| \eta \|_{\psi_2}  < \infty$. In particular, this implies that for some constants $C, c > 0$, $\PP(|\eta| \geq t) \le 2 \exp\{-c t^2/\|\eta \|_{\psi_2}^2\}$ and $\PE^{1/p}[|\eta|^p] \le C \sqrt{p} \|\eta \|_{\psi_2}$ for all $p \geq 1$. Consider a random process $(X_t)_{t \in T}$ on a metric space $(T, \mathsf{d})$. We say that the process has \emph{sub-Gaussian increments} if there exists $K \geq 0$ such that
$$
\| X_t - X_s \|_{\psi_2} \le K \mathsf{d}(t,s), \quad \forall t, s \in T.
$$

We start from the following proposition.
\begin{prop}
\label{prop: emp proc}
Under A\ref{ass:X} -- A\ref{ass: r-Vpi lip} for any $M \in \nset$ 
and $p \geq 1$
\begin{eqnarray*}
\label{eq: max of emp proc lp}
\PE^{1/p} \Bigl[\Bigl\Vert \frac{1}{M}\sum_{l=1}^{M}[V^{\pi}(\psi(\cdot,\cdot,\xi_{l}))-\mathsf{E}V^{\pi}(\psi(\cdot,\cdot,\xi_{l}))]\Bigr \Vert_{\Xset\times\mathcal{A}}^p \Bigr] \lesssim \frac{L_{\pi} I_{\mathcal D} +  \{ L_{\pi} \mathsf{D}  +  R_{\max}/(1-\gamma) \} \sqrt{p}}{\sqrt{M}}. 
\end{eqnarray*}
\end{prop}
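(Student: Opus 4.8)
The plan is to view
$G_M(x,a) \eqdef M^{-1}\sum_{l=1}^{M}\bigl[V^{\pi}(\psi(x,a,\xi_{l}))-\mathsf{E}V^{\pi}(\psi(x,a,\xi_{l}))\bigr]$
as a centered stochastic process indexed by $(x,a)\in\Xset\times\Aset$ and to control the $L^p$-norm of its supremum $\|G_M\|_{\Xset\times\Aset}$ by a chaining (Dudley-type) argument. The first step is to establish that this process has sub-Gaussian increments with respect to $\rho$. Fix $(x,a),(x',a')$ and write $W_l \eqdef V^{\pi}(\psi(x,a,\xi_{l}))-V^{\pi}(\psi(x',a',\xi_{l}))$, so that the corresponding increment of $G_M$ is $M^{-1}\sum_{l=1}^M(W_l-\mathsf{E}W_l)$. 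By A\ref{ass: r-Vpi lip} one has $|W_l|\leq L_{\pi}\rho((x,a),(x',a'))$ almost surely, hence $W_l-\mathsf{E}W_l$ is a centered bounded variable and is sub-Gaussian with $\|W_l-\mathsf{E}W_l\|_{\psi_2}\lesssim L_{\pi}\rho((x,a),(x',a'))$. Because the $\xi_l$ are i.i.d., the squared Orlicz norms add under the normalized sum, giving
\[
\|G_M(x,a)-G_M(x',a')\|_{\psi_2}\lesssim \frac{L_{\pi}}{\sqrt M}\,\rho\bigl((x,a),(x',a')\bigr),
\]
i.e. sub-Gaussian increments with constant $K\lesssim L_{\pi}/\sqrt M$.

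Next I would apply the sub-Gaussian tail refinement of Dudley's inequality (generic chaining) to $G_M$. Since $(x,a)\mapsto V^{\pi}(\psi(x,a,\xi_l))$ is Lipschitz, $G_M$ is a.s.\ $\rho$-continuous, so $\Xset\times\Aset$ may be taken separable and the supremum is measurable. Relative to an arbitrary base point $(x_0,a_0)$ the chaining bound yields
\[
\PE^{1/p}\Bigl[\sup_{(x,a),(x',a')}\bigl|G_M(x,a)-G_M(x',a')\bigr|^{p}\Bigr]\lesssim K\bigl(I_{\mathcal D}+\sqrt p\,\mathsf{D}\bigr)=\frac{L_{\pi}\bigl(I_{\mathcal D}+\sqrt p\,\mathsf{D}\bigr)}{\sqrt M},
\]
where $I_{\mathcal D}$ is precisely the Dudley integral of $\Xset\times\Aset$ and $\mathsf{D}$ its diameter. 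It is essential to use the $L^p$/high-probability form of the bound here, as the diameter term $\sqrt p\,\mathsf{D}$ is what produces the $\sqrt p$ factor appearing in the statement.

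It then remains to restore the value at the base point. Using $\sup_{(x,a)}|G_M(x,a)|\leq |G_M(x_0,a_0)|+\sup_{(x,a),(x',a')}|G_M(x,a)-G_M(x',a')|$ together with Minkowski's inequality in $L^p$, I would bound the two terms separately. The term $G_M(x_0,a_0)$ is an average of i.i.d.\ centered variables bounded by $2\|V^{\pi}\|_{\Xset}\leq 2R_{\max}/(1-\gamma)$ (the value function is bounded by $R_{\max}/(1-\gamma)$ under A\ref{ass: r-Vpi}), hence sub-Gaussian, so $\PE^{1/p}[|G_M(x_0,a_0)|^p]\lesssim R_{\max}(1-\gamma)^{-1}\sqrt p/\sqrt M$. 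Adding this to the chaining bound reproduces the claimed estimate
\[
\PE^{1/p}\bigl[\|G_M\|_{\Xset\times\Aset}^p\bigr]\lesssim \frac{L_{\pi}I_{\mathcal D}+\bigl(L_{\pi}\mathsf{D}+R_{\max}/(1-\gamma)\bigr)\sqrt p}{\sqrt M}.
\]

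The main obstacle is the quantitative chaining step: one must invoke the correct $L^p$ (equivalently, sub-Gaussian tail) version of Dudley's inequality and carefully track that the increment constant is $L_{\pi}/\sqrt M$, so that both the entropy-integral term and the diameter term inherit the $1/\sqrt M$ rate. The remaining ingredients — the Lipschitz-to-sub-Gaussian passage and the treatment of the single base point through the uniform bound $R_{\max}/(1-\gamma)$ on $V^{\pi}$ — are routine, and it is the base point that couples the $\sqrt p$ term to the reward bound $R_{\max}/(1-\gamma)$ rather than to $L_{\pi}$.
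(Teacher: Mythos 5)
Your proposal is correct and follows essentially the same route as the paper: establish sub-Gaussian increments of the centered empirical process via the Lipschitz assumption A\ref{ass: r-Vpi lip} and independence, apply Dudley's chaining inequality to the increment supremum, and handle the base point separately via Hoeffding using the bound $\|V^\pi\|_\Xset \le R_{\max}/(1-\gamma)$. The only cosmetic difference is that you invoke the $L^p$-moment form of the Dudley bound directly, whereas the paper states the tail (high-probability) form and converts to moments implicitly; these are equivalent.
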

\begin{proof}
We apply empirical process methods. To simplify notation, we denote
$$
Z(x, a) = \frac{1}{\sqrt{M}} \sum_{\ell=1}^{M}[V^{\pi}(\psi(x,a,\xi_{\ell}))-\mathsf{E}V^{\pi}(\psi(x,a,\xi_{\ell}))], \, (x,a) \in \Xset \times \Aset,
$$
that is, $Z(x,a)$ is a random process on the metric space $(\Xset \times \Aset, \rho)$. Below we show that the process $Z(x,a)$ has sub-Gaussian increments. To show this, let us introduce for $\ell \in [M]$
$$
Z_{\ell} \eqdef [V^{\pi}(\psi(x,a,\xi_{\ell}))-\mathsf{E}V^{\pi}(\psi(x,a,\xi_{\ell}))] - [V^{\pi}(\psi(x',a',\xi_{\ell}))-\mathsf{E}V^{\pi}(\psi(x',a',\xi_{\ell}))]\,.
$$
Clearly, by A\ref{ass: r-Vpi lip},
$$
\|Z_{\ell}\|_{\psi_2} \lesssim L_{\pi}\rho((x,a),(x',a'))\,,
$$
that is, $Z_\ell$ is a sub-Gaussian r.v.\ for any $\ell \in [M]$. Since $Z(x,a) - Z(x',a') = M^{-1/2} \sum_{\ell=1}^M Z_\ell$ is a sum of independent sub-Gaussian r.v., we may apply \cite[Proposition 2.6.1 and Eq. (2.16)]{Vershynin} to obtain that $Z(x,a)$ has sub-Gaussian increments with parameter $K \asymp L_{\pi}$. Fix some $(x_0, a_0) \in \Xset \times \Aset$. By the triangle inequality, 
\begin{equation}
\label{triangular inequality}
 \sup_{(x,a) \in \Xset \times \Aset} |Z(x,a)| \le \sup_{(x,a), (x',a') \in \Xset \times \Aset} |Z(x,a) - Z(x',a')| + Z(x_0, a_0).   
\end{equation}
By Dudley's integral inequality, e.g. \cite[Theorem 8.1.6]{Vershynin}, for any $\delta \in (0,1)$,
\begin{equation*}
  \sup_{(x,a), (x',a') \in \Xset \times \Aset} |Z(x,a) - Z(x',a')| \lesssim L_{\pi} \bigl[ I_{\mathcal D} +  \mathsf{D} \sqrt{\log(2/\delta)}\bigr ]\,
\end{equation*}
holds with probability at least $1 - \delta$. 
Again, under A\ref{ass: r-Vpi}, $Z(x_0,a_0)$ is a sum of i.i.d. bounded centered random variables with $\psi_2$-norm bounded by $R_{\max}/(1- \gamma)$. Hence, applying Hoeffding's inequality, e.g. \cite[Theorem 2.6.2.]{Vershynin}, for any $\delta \in (0,1)$, 
$$
|Z(x_0,a_0)| \lesssim R_{\max} \sqrt{\log(1/\delta)}/(1- \gamma)
$$
holds with probability $1-\delta$. The last two inequalities and \eqref{triangular inequality} imply the statement.
\end{proof} 

\vspace{-4ex}
\subsection{Proof of Theorem \ref{th: main}}
\label{sec:proof_main}
Fix $p \geq 2$ and denote for any $k \in \nset$, $\Mk_k \eqdef \PE^{1/p}[\Vert \widehat{V}_{k}^{{\rm {up}}}-V^{{\rm *}}\Vert_{\Xset}^p]$. For any $x \in \Xset$, we introduce 
\begin{equation*}
\widetilde{V}_{k+1}^{{\rm {up},\pi}}(x) = \frac{1}{M_{2}}\sum_{j=M_{1}+1}^{M_{1}+M_{2}}\max_{a}\left\{ r^{a}(x)+\gamma\Bigl(\widehat{V}_{k}^{{\rm {up}}}(Y_{j}^{x,a})-V^{\pi}(Y_{j}^{x,a})+\frac{1}{M_{1}}\sum_{\ell=1}^{M_{1}}V^{\pi}(Y_{\ell}^{x,a})\Bigr)\right\} .
\end{equation*}
Recall that $Y_j^{x,a} = \psi(x,a,\xi_{k,j}), j \in [M_1+M_2]$ for independent random variables $(\xi_{k,j})$, thus we can write
\begin{equation}
\label{eq: Vtilde}
\widetilde{V}_{k+1}^{{\rm {up},\pi}}(x) = \frac{1}{M_{2}}\sum_{j=M_{1}+1}^{M_{1}+M_{2}} R_{k}^{x}(\xi_{k,j};\xi_{k,1},\ldots,\xi_{k,M_{1}})\,.
\end{equation}
We first calculate $L_{k+1} \eqdef \mathrm{Lip}_{\rho}(\widetilde{V}_{k+1}^{{\rm {up},\pi}})$ for any $k \in \nset$. Since under A\ref{ass: r-Vpi lip},
$\mathrm{Lip}_{\rho}((V^\pi\circ\psi)(\cdot,\cdot,\xi)) \leq  L_{\pi}$, and using \eqref{eq: Vtilde},
\begin{equation}
\label{eq:Lipsh_constant_recurrence}
L_{k+1} \leq L_{\max} + \gamma(L_{k}L_{\psi} + 2L_{\pi})\,.   
\end{equation}
Expanding \eqref{eq:Lipsh_constant_recurrence} and using the assumptions of Theorem~\ref{th: main}, we obtain
$$
L_{k+1} \leq \frac{L_{\max} + 2\gamma L_{\pi}}{1-\gamma L_{\psi}} + (\gamma L_{\psi})^{k}L_{0}\,, \quad k \in \nset\,.
$$
Using that $\gamma L_{\psi} < 1$, the maximal Lipschitz constant of $\widetilde{V}_{k}^{{\rm {up},\pi}}(x), k \in \nset$ is uniformly bounded by
\begin{equation}
\label{eq:L_V_definition}
L_V = \frac{L_{\max} + 2\gamma L_{\pi}}{1-\gamma L_{\psi}} + L_{0}\,.
\end{equation}
Using \eqref{eq: Vtilde} and ~\eqref{eq:bellman-as}, for any \(x \in \Xset\) and $j = M_1 + 1, \ldots, M_1+M_2$. 
\begin{equation*}
\begin{split}
&\PE^{1/p}[|R_{k}^{x}(\xi_{k,j};\xi_{k,1},\ldots,\xi_{k,M_{1}}) - V^{*}(x)|^p] \leq \\
&\PE^{1/p}\left[\left|\max_{a}\left\{ r^{a}(x)+\gamma\Bigl(\widehat{V}_{k}^{{\rm {up}}}(Y_{j}^{x,a})-V^{\pi}(Y_{j}^{x,a})+M_1^{-1}\sum\nolimits_{\ell=1}^{M_{1}}V^{\pi}(Y_{\ell}^{x,a})\Bigr)\right\}\right.\right. - \\
&\qquad\left.\left.\max_a \left\{r^a(x)+ \gamma (V^\star(Y^{x,a})-V^\star(Y^{x,a}) + \MK^{a}V^\star(x)\right\}\right|^p\right] .
\end{split}
\end{equation*}
Hence, with Minkowski's inequality and $|\MK^{a}V^\star(x) - \PE V^{\pi}(\psi(x,a,\cdot))| \leq \|V^{\pi}-V^{*}\|_{\Xset}$, we get
\begin{equation}
\label{eq: R - Vstar}
\begin{split}
&\PE^{1/p}[|R_{k}^{x}(\xi_{k,j};\xi_{k,1},\ldots,\xi_{k,M_{1}}) - V^{*}(x)|^p] \leq \gamma \Mk_k +2\gamma\left\Vert V^{\pi}-V^{{\rm *}}\right\Vert _{\Xset}
\\
&\qquad\qquad\qquad+\gamma \PE^{1/p}\Bigl[\Bigl\Vert M_1^{-1}\sum\nolimits_{\ell=1}^{M_{1}}[V^{\pi}(\psi(\cdot,\cdot,\xi_{k,\ell})) - 
\mathsf{E}V^{\pi}(\psi(\cdot,\cdot,\xi_{k,\ell}))]\Bigr\Vert _{\Xset\times\mathcal{A}}^p \Bigr] .
\end{split}
\end{equation}
To analyze the last term we use empirical process methods. By Proposition \ref{prop: emp proc}, we get
\begin{equation*}
 \PE^{1/p}\Bigl [\Bigl\Vert \frac{1}{M_{1}}\sum_{\ell=1}^{M_{1}}[V^{\pi}(\psi(\cdot,\cdot,\xi_{k,\ell}))-\mathsf{E}V^{\pi}(\psi(\cdot,\cdot,\xi_{k,\ell}))]\Bigr\Vert _{\Xset\times\mathcal{A}}^p \Bigr] \lesssim  \frac{L_{\pi} I_{\mathcal D} +  \{ L_{\pi} \mathsf{D}  +  R_{\max}/(1-\gamma)\} \sqrt{p}}{\sqrt{M_1}}.
\end{equation*}
Furthermore, with \eqref{eq:interp-error} we construct a Lipschitz interpolant $\widehat{V}_{k+1}^{{\rm {up}}}$ such that
\begin{eqnarray*}
|\widehat{V}_{k+1}^{{\rm {up}}}(x)-\widetilde{V}_{k+1}^{{\rm {up},\pi}}(x)|&\lesssim&   L_{k+1} \rho(\Xset_N, \Xset)\,.
\end{eqnarray*}
Combining the above estimates, we get 
\begin{eqnarray*}
\Mk_{k+1} &\lesssim & \gamma \Mk_k +\gamma\left\Vert V^{\pi}-V^{{\rm *}}\right\Vert _{\Xset}
+\gamma \frac{L_{\pi} I_{\mathcal D} +  \{ L_{\pi} \mathsf{D}  +  R_{\max}/(1-\gamma) \} \sqrt{p}}{\sqrt{M_1}} +L_{k+1}\rho(\Xset_N, \Xset)\,.
\end{eqnarray*}
Iterating this inequality,
\begin{multline}
\label{eq:lp}
\PE^{1/p}[\Vert \widehat{V}_{k}^{{\rm {up}}}-V^{{\rm *}}\Vert _{\Xset}^p] \lesssim \gamma^k\bigl\Vert \widehat{V}_{0}^{{\rm {up}}}-V^{{\rm *}}\bigr\Vert _{\Xset} + \frac{\gamma}{1-\gamma} \left\Vert V^{\pi}-V^{{\rm *}}\right\Vert _{\Xset} +
\\
\frac{\gamma L_{\pi} I_{\mathcal D} +  \gamma\{ L_{\pi} \mathsf{D}  +  R_{\max}/(1-\gamma) \} \sqrt{p} }{\sqrt{M_1}(1 - \gamma)}  + \frac{L_V}{1-\gamma}\rho(\Xset_N, \Xset)\,.
\end{multline}
Applying Markov's inequality with $p \asymp \log{(1/\delta)}$, we get that for any $k \in \nset$ and $\delta \in (0,1)$,
\begin{equation}
\label{eq:Markov_inequality_explicit_constant}
\begin{split}
&\Vert \widehat{V}_{k}^{{\rm {up}}}-V^{{\rm *}}\Vert_{\Xset} 
\lesssim \gamma^k\bigl\Vert \widehat{V}_{0}^{{\rm {up}}}-V^{{\rm *}}\bigr\Vert _{\Xset} + \frac{\gamma}{1-\gamma} \left\Vert V^{\pi}-V^{{\rm *}}\right\Vert _{\Xset} + \\
&\quad \frac{\gamma L_{\pi} I_{\mathcal D} +  \gamma\{ L_{\pi} \mathsf{D}  +  R_{\max}/(1-\gamma)\}\sqrt{\log(1/\delta)} }{\sqrt{M_1}(1 - \gamma)}  + \frac{L_V}{1-\gamma}\rho(\Xset_N, \Xset)\,.
\end{split}
\end{equation}
holds with probability at least $1 - \delta$, where the constant $L_V$ is given in \eqref{eq:L_V_definition}. This yields the statement of the theorem.

\vspace{-4ex}
\subsection{Proof of Corollary~\ref{cor:cor_discrete_main} and Corollary~\ref{cor:cor_continuous_main}}
\label{sec:corollaries_proof_main}
\begin{proof}[Proof of Corollary~\ref{cor:cor_discrete_main}]
Consider $\rho((x,a), (x',a')) = \indiacc{(x,a) \neq (x',a')}$ and $\Xset_N = \Xset$, that is, we bypass the approximation step. Then $\mathsf{D} = 1$, $I_{\mathcal D} \lesssim \sqrt{\log(|\Xset| |\Aset|)}$, $\rho(\Xset_N, \Xset) = 0$, and $r^{a}(\cdot)$ is Lipschitz w.r.t. $\rho_{\Xset}$ with $L_{\max} \leq R_{\max}$. Moreover, one can take $L_{\psi} = 1$ and $L_{\pi} = R_{\max}/(1-\gamma)$ in Assumption A\ref{ass: r-Vpi lip}. Hence,  A\ref{ass:X} -- A\ref{ass: r-Vpi lip} are valid and one may apply Theorem~\ref{th: main}. Bound \eqref{eq:Markov_inequality_explicit_constant} in this case writes as
\begin{multline}
\label{eq:prob_bound_tabular_case}
\Vert\widehat{V}_{k}^{{\rm {up}}}-V^{{\rm *}}\Vert_{\Xset} 
\lesssim \gamma^k\bigl\Vert \widehat{V}_{0}^{{\rm {up}}}-V^{{\rm *}}\bigr\Vert _{\Xset} + \frac{\gamma}{1-\gamma} \left\Vert V^{\pi}-V^{{\rm *}}\right\Vert _{\Xset} +\\ 
\frac{\gamma R_{\max}(\sqrt{\log(|\Xset| |\Aset|)} + 2)\sqrt{\log{(1/\delta)}}}{\sqrt{M_1}(1-\gamma)^2}\,.  
\end{multline}
\end{proof}
\begin{proof}[Proof of Corollary~\ref{cor:cor_continuous_main}]
It is easy to see that $\mathsf{D} \leq \sqrt{\dx} + 1$, $I_{\mathcal D} \lesssim \sqrt{\dx\log |\Aset|} + \sqrt{\dx \log \dx}$.
Proposition \ref{prop: covering rad} implies that for any $\delta \in (0,1)$,
$\rho(\Xsetn, \Xset) \lesssim \sqrt{\dx} \left( N^{-1} \log(1/\delta) \log N  \right)^{1/\dx}$. Substituting into \eqref{eq:Markov_inequality_explicit_constant}, we obtain
\begin{multline}
\label{eq:prob_bound_subset_r_d}
\Vert\widehat{V}_{k}^{{\rm {up}}}-V^{{\rm *}}\Vert_{\Xset} 
\lesssim \gamma^k\bigl\Vert \widehat{V}_{0}^{{\rm {up}}}-V^{{\rm *}}\bigr\Vert _{\Xset} + \frac{\gamma}{1-\gamma} \left\Vert V^{\pi}-V^{{\rm *}}\right\Vert_{\Xset} + \frac{L_V\sqrt{\dx} \left( N^{-1} \log(1/\delta) \log N  \right)^{1/\dx}}{1-\gamma} \\ 
+\frac{\gamma L_{\pi}(\sqrt{\dx\log |\Aset|} + \sqrt{\dx \log \dx}) +  \gamma\{ L_{\pi} \sqrt{\dx} +  R_{\max}/(1-\gamma)\}\sqrt{\log(1/\delta)} }{\sqrt{M_1}(1 - \gamma)}\,,
\end{multline}
where $L_V$ is given in \eqref{eq:L_V_definition}.
\end{proof}

\vspace{-4ex}
\subsection{Proof of Theorem \ref{th: variance}}
\label{sec:proof_variance_bound}
We use the definition of $\widetilde{V}_{k+1}^{{\rm {up},\pi}}(x)$ and $R_{k}^{x}(\xi_{k,j};\xi_{k,1},\ldots,\xi_{k,M_{1}})$ from Theorem~\ref{th: main}. To simplify notation, we denote $\xiv_{k,M_1} = (\xi_{k,1}, \ldots, \xi_{k, M_1})$ and $\xiv_{k,M_2} = (\xi_{k,M_1+1}, \ldots, \xi_{k, M_1+M_2})$. In this notation $\xiv_k = (\xiv_{k,M_1}, \xiv_{k,M_2})$. Recall that, by construction, $\widetilde{V}_{k+1}^{{\rm {up},\pi}}(x)$ can be evaluated only at the points $x \in \{\smallx_1,\dots,\smallx_N\}$. By definition,
\begin{equation}
\label{eq: V up representation}
\widehat{V}_{k+1}^{{\rm {up}}}(x) = \min_{\ell \in [N]} \bigl\{ \widetilde{V}_{k+1}^{{\rm {up},\pi}}(\smallx_\ell) + L_{V} \rho_{\Xset}(\smallx_\ell, x) \bigr\}\,,
\end{equation}
where the constant $L_V$ is given in \eqref{eq:L_V_definition}. We rewrite $\widetilde{V}_{k+1}^{{\rm {up},\pi}}(x)$ as follows
\begin{eqnarray*}
\widetilde{V}_{k+1}^{{\rm {up},\pi}}(x)&=&\frac{1}{M_{2}}\sum_{j=M_{1}+1}^{M_{1}+M_{2}} \{ R_{k}^{x}(\xi_{k,j}; \xiv_{k, M_1})  - \PE[R_{k}^{x}(\xi_{k,j};\xiv_{k,M_1})] \}  \\
&+&  \PE[R_{k}^{x}(\xi;\xiv_{k,M_1})]
=: T_k^x(\xiv_k) +  \PE[R_{k}^{x}(\xi;\xiv_{k,M_1})],
\end{eqnarray*}
where $\xi$ is an i.i.d. copy of $\xi_{k,j}$. Conditioned on $\overline{\sn{G}}_k \eqdef \sn{G}_{k-1} \cup \sigma\{\xiv_{k, M_1}\}$, $T_k^x(\xiv_{k, M_1},  \xiv_{k,M_2})$ is the sum of i.i.d. centered random variables.
In what follows, we will often omit the arguments $\xiv_{k, M_1}$ and/or $\xiv_{k, M_2}$ from the notation of functions $T_k^x$. Using representation \eqref{eq: V up representation},
\begin{align*}
\Var[\widehat{V}_{k+1}^{{\rm {up}}}(x)] &= \Var\left[\min_{\ell \in [N]} \bigl\{ \widetilde{V}_{k+1}^{{\rm {up},\pi}}(\smallx_\ell) + L_{V} \rho_{\Xset}(\smallx_\ell, x) \bigr\}\right] \\
&\leq \PE\left[\left(\min_{\ell \in [N]} \bigl\{ \widetilde{V}_{k+1}^{{\rm {up},\pi}}(\smallx_\ell) + L_{V} \rho_{\Xset}(\smallx_\ell, x) \bigr\} - \min_{\ell \in [N]}\bigl\{\PE[R_{k}^{x}(\xi;\xiv_{k,M_1})] + L_{V} \rho_{\Xset}(\smallx_\ell, x)\bigr\}\right)^2\right]\,. 
\end{align*}
Hence, from the previous inequality and the definition of $\widetilde{V}_{k+1}^{{\rm {up},\pi}}(x)$,
\begin{eqnarray*}
\Var[\widehat{V}_{k+1}^{{\rm {up}}}(x)] \leq \PE[\sup_{\ell \in [N]} \vert T_k^{\smallx_\ell}(\xiv_{k, M_1},  \xiv_{k,M_2}) \vert^2] \leq \PE[\sup_{x \in \Xset} \vert  T_k^x(\xiv_{k, M_1},  \xiv_{k,M_2})\vert^2].
\end{eqnarray*}
To estimate the right-hand side of the previous inequality we again apply the empirical process method. We first note that for any $x, x' \in \Xset$,
\begin{eqnarray}
\label{eq: T lip}
\sup_{\xiv \in \Xi^{M_1+M_2} }|T_k^x(\xiv) - T_k^{x'}(\xiv)| \leq L_{T} \rho_\Xset(x,x'),
\end{eqnarray}
where 
\begin{equation}
\label{eq:L_T_const}
L_{T} = L_{\max} + \gamma(L_{V} + 2L_{\pi})\,.
\end{equation}
Now we freeze the coordinates  $\xiv_{M_1}$ and consider $T_k^x(\xiv_{M_1}, \cdot)$ as a function on $\Xi^{M_2}$, parametrized by $x \in \Xset$. Introduce a parametric class of functions
$$
\mathcal{T}_{k, \xiv_{M_1}} \eqdef \bigl \{T_k^x(\xiv_{M_1},  \cdot): \Xi^{M_2} \to \rset, x \in \Xset \bigr\}\,.
$$
For notational simplicity we will omit dependencies on $k$ and $\xiv_{M_1}$, and simply write $T^x(\cdot) = T_k^x(\xiv_{M_1},\cdot)$. Note that the functions in $\mathcal{T}_{k, \xiv_{M_1}}$ are Lipschitz w.r.t. the uniform metric 
\[
\rho_{\mathcal{T}_{k,\xiv_{M_1}}}(T^x(\cdot), T^{x'}(\cdot)) = \sup_{\xiv_{M_2} \in \Xi^{M_2}} |T^x(\xiv_{M_2}) - T^{x'}(\xiv_{M_2})|, \quad T^x(\cdot),\,T^{x'}(\cdot) \in \mathcal{T}_{k,\xiv_{M_1}}\,.
\]
To estimate $\diam(\mathcal{T}_{k,\xiv_{M_1}})$ we proceed as follows. Denote $\tilde{R}_k^x(\xi;\xiv_{k,M_1}) = R_{k}^{x}(\xi;\xiv_{k,M_1})  - \PE\left[R_{k}^{x}(\xi;\xiv_{k,M_1})\right]$. Using \eqref{eq: R - Vstar}, we get an upper bound
\begin{equation}
\label{eq:r_k_bound}
\begin{split}
&\left|\tilde{R}_k^x(\xi;\xiv_{k,M_1})\right| 
\lesssim \gamma \Vert \widehat{V}_{k}^{{\rm {up}}}-V^{{\rm *}}\Vert_{\Xset}+2\gamma\left\Vert V^{\pi}-V^{{\rm *}}\right\Vert _{\Xset} + \\ 
&\qquad\qquad \gamma\Bigl\Vert M_1^{-1}\sum_{l=1}^{M_{1}}[V^{\pi}(\psi(\cdot,\cdot,\xi_{k,l}))-\mathsf{E}V^{\pi}(\psi(\cdot,\cdot,\xi_{k,l}))]\Bigr\Vert _{\Xset\times\mathcal{A}}.
\end{split}
\end{equation}
We denote the right-hand side of this inequality by $R_k^\star$. Clearly, $R_k^\star$ is an $\overline{\sn{G}}_k$-measurable function (recall that $\overline{\sn{G}}_k = \sn{G}_{k-1} \cup \sigma\{\xiv_{k, M_1}\}$). We may conclude that $\diam(\mathcal{T}_{k,\xiv_{M_1}}) \le 2R_k^\star$. Furthermore, by \eqref{eq: T lip}, its covering number can be bounded as
$$
\sn{N}(\mathcal{T}_{k,\xiv_{M_1}}, \rho_\mathcal T, \varepsilon) \le \sn{N}(\Xset \times \Aset, \rho, \varepsilon/L_T ).
$$
It is also easy to check that $\bigl(T^x(\xiv_{k, M_2})\bigr), T^x \in \mathcal{T}_{k,\xiv_{M_1}}$ is a sub-Gaussian process on $(\mathcal{T}_{k,\xiv_{M_1}}, \rho_T)$ with
$$
\Vert T^x - T^{x'} \Vert_{\psi_2} \lesssim \rho_\mathcal T(T^x, T^{x'}). 
$$
Applying the tower property, we get
$$
\PE[ \sup_{x \in \Xset} \vert T_k^x(\xiv_{k, M_2}) \vert^2] \le \PE[\PE[\sup_{x \in \Xset} \vert  T_k^x(\xiv_{k, M_2}) \vert^2 \vert \overline{\sn{G}}_k]].
$$
Using Dudley's integral inequality, e.g. \cite[Theorem 8.1.6]{Vershynin} and assumption A\ref{ass: covering number},
\begin{eqnarray*}
& \PE[ \sup_{x \in \Xset} \vert T_k^x(\xiv_{k, M_1},\xiv_{k, M_2}) \vert^2 \vert \overline{\sn{G}}_k] = \PE[ \sup_{T^x \in \mathcal{T}_{k,\xiv_{M_1}}} \vert T^x(\xiv_{k, M_2}) \vert^2 \vert \overline{\sn{G}}_k] \\
& \qquad \qquad \qquad   \lesssim \PE[\vert T^{x_0}( \xiv_{k, M_2}) \vert^2 \vert \overline{\sn{G}}_k] + \frac{1}{M_2} \Bigl \{L_{T}\sqrt{C_{\Xset, \Aset}} \int_0^{2 R_k^\star/L_{T}} \sqrt{\log(1 + 1/\varepsilon)} \rmd \varepsilon  + R_k^\star \Bigr\}^2,
\end{eqnarray*}
where $x_0 \in \Xset$ is some fixed point. To estimate the first term in the right-hand side of the previous inequality we apply Hoeffding's inequality. We obtain
$$
\PE[\vert T^{x_0}(\xiv_{k, M_2}) \vert^2 \vert \overline{\sn{G}}_k] \lesssim \frac{(R_k^\star)^2}{M_2}.
$$
Applying Proposition \ref{prop analysis}, we get
$$
\int_0^{2 R_k^\star/L_T} \sqrt{\log(1 + 1/\varepsilon)} \rmd \varepsilon  \lesssim (R_k^\star \sqrt{\log(1 + 1/R_k^\star)} + R_k^\star)/L_T .
$$
The last two inequalities imply
$$
\PE[ \sup_{x \in \Xset} \vert T_k^x(\xiv_{k, M_2}) \vert^2 \vert \overline{\sn{G}}_k] \lesssim C_{\Xset, \Aset}\frac{(R_k^{\star})^2 + (R_k^{\star})^2 \log(1 + 1/R_k^{\star}) }{M_2}.  
$$
Since for $x > 0$ and $\varepsilon \in (0,1]$
$$
\log(1 + x) \le  \varepsilon^{-1} x^\varepsilon, 
$$
we obtain
$$
\PE[ \sup_{x \in \Xset} \vert T_k^x \vert^2] \lesssim C_{\Xset, \Aset}\frac{\PE[(R_k^{\star})^2] + \PE[(R_k^{\star})^{2 - \varepsilon}]/\varepsilon}{M_2}\,.
$$
Using \eqref{eq:r_k_bound}, we get for any $p \geq 1$
\begin{equation*}
\begin{split}
\PE^{1/p}\bigl[(R_k^{\star})^p\bigr] 
&\leq \gamma \PE^{1/p}\bigl[\Vert \widehat{V}_{k}^{{\rm {up}}}-V^{{\rm *}}\Vert^{p}_{\Xset}\bigr] + 2\gamma\left\Vert V^{\pi}-V^{{\rm *}}\right\Vert _{\Xset} + \\
&\qquad \gamma\PE^{1/p}\left[\Bigl\Vert M_1^{-1}\sum_{l=1}^{M_{1}}[V^{\pi}(\psi(\cdot,\cdot,\xi_{k,l}))-\mathsf{E}V^{\pi}(\psi(\cdot,\cdot,\xi_{k,l}))]\Bigr\Vert^{p}_{\Xset\times\mathcal{A}}\right].
\end{split}
\end{equation*}
Thus, applying \eqref{eq:lp} and Proposition~\ref{prop: emp proc}, for any $p \geq 1$,
$$
\PE^{1/p}\bigl[(R_k^{\star})^p\bigr] \leq 3 \ConstC_{0} \sigma_k\,,
$$
where the quantity $\sigma_k$ is defined as
\begin{equation}
\label{eq:sigma_k_definition_correct}
\sigma_k = \gamma^k\bigl\Vert \widehat{V}_{0}^{{\rm {up}}}-V^{{\rm *}}\bigr\Vert_{\Xset} + \left\Vert V^{\pi}-V^{{\rm *}}\right\Vert _{\Xset} +\frac{I_{\mathcal D} +  \mathsf{D}}{\sqrt{M_1}}  + \rho(\Xset_N, \Xset)\,,
\end{equation}
and the constant $\ConstC_{0}$ is given by
\begin{equation}
\label{eq:const_C_0_definition}
\ConstC_{0} = \max\left\{\frac{\gamma L_{\pi}}{1-\gamma} + \frac{R_{\max}}{(1-\gamma)^2}, \frac{(L_{\max} + 2\gamma L_{\pi})\sqrt{2}}{(1-\gamma L_{\psi})(1-\gamma)} + \frac{L_{0}}{1-\gamma}, \frac{\gamma}{1-\gamma}\right\}\,.
\end{equation}
This yields the final bound
\begin{equation}
\label{eq:Var_final_bound}
\Var[\widehat{V}_{k+1}^{{\rm {up}}}(x)] \leq \PE[ \sup_{x \in \Xset} \vert T_k^x \vert^2] \leq 9C_{\Xset, \Aset}\ConstC_0^2 \frac{\sigma_k^{2} + \sigma_k^{2-\varepsilon}/\varepsilon }{M_2} \eqdef \ConstC\frac{\sigma_k^{2} + \sigma_k^{2-\varepsilon}/\varepsilon }{M_2}\,.
\end{equation}
Now the statement follows by the choice $\varepsilon = \log^{-1}(\rme \vee \sigma_k^{-1})$.

\vspace{-2ex}
\subsection{Proof of Proposition~\ref{prop:sigma_k_bound}}
\label{sec:proof_prop_5_1}
The corrected statement of Proposition~\ref{prop:sigma_k_bound} is given below:
\begin{prop}
\label{prop:sigma_k_bound_corrected_1}
Let $|\Xset|, |\Aset| < \infty$, assume A\ref{ass:Pa}, A\ref{ass: r-Vpi}, and $\bigl\Vert \widehat{V}_{0}^{{\rm {up}}} \bigr\Vert_{\Xset} \leq R_{\max}(1-\gamma)^{-1}$. Then for $k$ and $M_1$ large enough, it holds that
\begin{equation}
\label{eq:upper_bound_scaling_correct}
\sigma_k \lesssim \left\Vert V^{\pi}-V^\star \right\Vert _{\Xset}\,.
\end{equation}
The precise bounds for $k$ and $M_1$ are given in  \eqref{eq:k_M_1_bound_appendix}.
\end{prop}
\begin{proof}
Applying \eqref{eq:sigma_k_definition_correct} with $I_{\mathcal D} \lesssim \sqrt{\log{|\Xset||\Aset|}}$, $\mathsf{D} = 1$, we obtain that
\[
\sigma_k \lesssim \gamma^k\bigl\Vert \widehat{V}_{0}^{{\rm {up}}}-V^{{\rm *}}\bigr\Vert _{\Xset} + \left\Vert V^{\pi}-V^{{\rm *}}\right\Vert _{\Xset} +
\frac{\gamma R_{\max}(\sqrt{\log(|\Xset| |\Aset|)} + 1)}{\sqrt{M_1}(1-\gamma)^2}. 
\]
Note that, under assumption A\ref{ass: r-Vpi}, $\bigl\Vert V^\star \bigr\Vert_{\Xset} \leq R_{\max}(1-\gamma)^{-1}$. Hence, the previous bound implies $\sigma_k \lesssim \left\Vert V^{\pi}-V^\star \right\Vert _{\Xset}$, provided that $k$ and $M_1$ are large enough to guarantee 
\begin{align*}
\gamma^{k-1} R_{\max}\leq \left\Vert V^{\pi}-V^\star \right\Vert _{\Xset}, \quad R_{\max}(\sqrt{\log (|\Xset| |\Aset|)}   +  1)M_1^{-1/2}(1 - \gamma)^{-2}\leq \left\Vert V^{\pi}-V^\star \right\Vert _{\Xset}\,.    
\end{align*}
Thus, it is enough to choose
\begin{equation}
\label{eq:k_M_1_bound_appendix}
\begin{split}
&k \geq \log{\left\Vert V^{\pi}-V^\star \right\Vert _{\Xset}}(\log{(1/\gamma)})^{-1}\,, \\
&M_1 \geq R_{\max}^2(\sqrt{\log (|\Xset| |\Aset|)}   +  1)^2((1 - \gamma)^2\left\Vert V^{\pi}-V^\star \right\Vert _{\Xset})^{-2}\,.
\end{split}
\end{equation}
\end{proof}

\subsection{The covering radius of randomly distributed points over a cube}
The following proposition is a particular case of the result \cite[Theorem 2.1]{Reznikov}. We repeat the arguments from that paper and give explicit expressions for the constants. 
\begin{prop}
\label{prop: covering rad}
Let $\Xset = [0,1]^\dx$ and $\mu$ be a uniform distribution on $\Xset$. Suppose that $\Xset_N = \{X_1, \ldots, X_N\}$ is a set of $N$ points independently distributed over $\Xset$ w.r.t. $\mu$. Denote by $\rho(\Xset_N,\Xset) \eqdef \max_{x \in \Xset} \min_{k \in [N]}|x - X_k|$ the covering radius of the set $\Xset_N$ w.r.t. $\Xset$. Then for any $p \geq 1$,
\begin{equation}
\label{cov rad lp}
\EE{\rho^p(\Xsetn, \Xset)}^{1/p} \lesssim \sqrt{\dx} \left( \frac{p \log N}{N} \right)^{1/\dx}.  
\end{equation}
Moreover, for any $\delta \in (0,1)$
\begin{equation}
\label{cov rad prob}
\rho(\Xsetn, \Xset) \lesssim \sqrt{\dx} \left( \frac{\log(1/\delta) \log N}{N} \right)^{1/\dx}  
\end{equation}
holds with probability at least $1 - \delta$.
\end{prop}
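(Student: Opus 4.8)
The plan is to reduce the bound on the covering radius to an elementary occupancy (coupon-collector type) estimate via a deterministic discretization of the cube, and then to convert the resulting tail bound into both the moment estimate \eqref{cov rad lp} and the high-probability estimate \eqref{cov rad prob}. First I would fix an integer $m \ge 1$ and partition $\Xset = [0,1]^{\dx}$ into $m^{\dx}$ congruent closed subcubes $Q_1, \ldots, Q_{m^{\dx}}$ of side length $1/m$. The key geometric observation is that each $Q_i$ has Euclidean diameter $\sqrt{\dx}/m$, so if every subcube contains at least one of the points $X_1, \ldots, X_N$, then for an arbitrary $x \in \Xset$ the subcube containing $x$ also contains some $X_k$, whence $\min_{k \in [N]} |x - X_k| \le \sqrt{\dx}/m$. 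Taking the supremum over $x$ gives the deterministic implication
\[
\{ \text{every } Q_i \text{ contains a point of } \Xset_N \} \subseteq \bigl\{ \rho(\Xset_N, \Xset) \le \sqrt{\dx}/m \bigr\}.
\]

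Next I would estimate the probability of the complementary event. Since $\mu$ is uniform, $\mu(Q_i) = m^{-\dx}$, and the points are independent, the probability that a fixed $Q_i$ is empty equals $(1 - m^{-\dx})^N \le \exp(-N m^{-\dx})$. A union bound over the $m^{\dx}$ subcubes, combined with the inclusion above, then yields the single tail estimate
\[
\Prob{\rho(\Xset_N, \Xset) > \sqrt{\dx}/m} \le m^{\dx}\exp(-N m^{-\dx}), \quad m \in \nset,
\]
which is the engine behind both claims.

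For the high-probability statement \eqref{cov rad prob} I would choose $m = \lceil (c\, N / (\log(1/\delta)\log N))^{1/\dx} \rceil$ for a suitable absolute constant $c$, so that $N m^{-\dx} \asymp \log(1/\delta)\log N$; substituting into the tail bound and simplifying $m^{\dx}\exp(-N m^{-\dx})$ shows it is at most $\delta$, while $\sqrt{\dx}/m \lesssim \sqrt{\dx}\,(\log(1/\delta)\log N/N)^{1/\dx}$ gives exactly the claimed radius. For the moment bound \eqref{cov rad lp} I would integrate the tail via the layer-cake formula $\EE{\rho^p(\Xsetn,\Xset)} = \int_0^\infty p\, t^{p-1}\,\Prob{\rho(\Xsetn,\Xset) > t}\,\rmd t$, splitting at the threshold $t_0 \asymp \sqrt{\dx}\,(p\log N/N)^{1/\dx}$: below $t_0$ I bound $\Prob{\rho > t}$ by $1$, and above $t_0$ the estimate exhibits a sub-Weibull tail of exponent $\dx$ (the tail decays like $\exp(-c\,N t^{\dx}/\dx^{\dx/2})$), so the integral contributes at the same order, producing $\EE{\rho^p}^{1/p} \lesssim \sqrt{\dx}\,(p\log N/N)^{1/\dx}$.

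The conceptual steps are straightforward; the main obstacle is purely bookkeeping. One has to choose the discretization level $m$ as an integer and track how the ceiling, the constant $c$, and the restriction on $\delta$ interact so that $m^{\dx}\exp(-N m^{-\dx}) \le \delta$ holds uniformly while keeping the constants explicit, as promised in the statement. Some care is also needed because the bound is meaningful only once $N$ is large enough that $m \ge 1$ and $\log N > 0$; these mild degenerate regimes must be handled, or absorbed into the implicit constant, to make the clean $\lesssim$ statements valid for all admissible $\delta$.
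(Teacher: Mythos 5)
Your proposal is correct and rests on the same engine as the paper's proof: discretize the cube at scale $1/m$, union-bound over the event that some cell is empty, use $(1-m^{-\dx})^N\le \exp(-Nm^{-\dx})$, and then optimize $m$. The difference is in the discretization: you partition $[0,1]^{\dx}$ into $m^{\dx}$ congruent subcubes, exploiting the product structure of the cube, whereas the paper (following Reznikov--Saff) takes a maximal $1/n$-separated set $\mathcal E_n$ and compares ball volumes to bound $|\mathcal E_n|$; your version is more elementary and avoids the volume comparison, while the paper's version generalizes verbatim to compact metric spaces equipped with a measure whose balls have controlled mass. The two routes to the final bounds are also swapped: you derive the moment bound \eqref{cov rad lp} by layer-cake integration of the tail and the probability bound \eqref{cov rad prob} directly from the tail, while the paper bounds $\EE{\rho^p(\Xsetn,\Xset)}^{1/p}$ by the crude split $2/n+\sqrt{\dx}\,\Prob{\rho(\Xsetn,\Xset)>2/n}^{1/p}$ and then obtains \eqref{cov rad prob} from \eqref{cov rad lp} via Markov's inequality with $p\asymp\log(1/\delta)$; both conversions work. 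One point of bookkeeping you should not gloss over: in the layer-cake step the threshold must be taken as $t_0\asymp\sqrt{\dx}\,((\dx+p)\log N/N)^{1/\dx}$ rather than with exponent exactly $p\log N$, since otherwise the polynomial prefactor $m^{\dx}\approx N/(p\log N)$ is not absorbed for small $p$ (e.g.\ $p=1$ would leave a term of order $1/\log N$, far larger than the claimed rate); this is precisely the role of the paper's choice $\alpha=1+p/\dx$.
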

\begin{proof}
Let $\mathcal E_n = \mathcal E_n(\Xset)$ be a maximal set of points such that for any $y, z \in \mathcal E_n$ we have $|y - z| \geq 1/n$. Then for any $x \in \Xset$ there exists a point $y \in \mathcal E_n$ such that $|x - y| \le 1/n$. Denote by $B(x, r)$ a ball centred at $x \in \Xset$ of radius $r$ (w.r.t. $| \cdot |$) and 
$$
\Phi(r) = \frac{r^\dx\pi^{\dx/2}}{2^\dx\Gamma(\dx/2+1) }, \, r \in [0, \infty) .
$$
Since for any $x \in \Xset$, $\mu(B(x,r)) \geq \Phi(r)$, 
\begin{equation*}
    1 = \mu(\Xset) \geq \sum_{x \in \mathcal E_n} \mu(B(x, (1/(3n)))) \geq |\mathcal E_n| \Phi(1/(3n)).
\end{equation*}
Hence,
\begin{eqnarray}
\label{eq: cardinality of E_n}
|\mathcal E_n| \le \{\Phi(1/(3n))\}^{-1}.
\end{eqnarray}
Suppose that $\rho(\Xsetn, \Xset) > 2/n$. Then there exists a point $y \in \Xset$ such that $\Xsetn \cap B(y,2/n) = \emptyset$. Choose a point $x \in \mathcal E_n$ such that $|x - y| < 1/n$. Then $B(x, 1/n) \subset B(y, 2/n)$, and so the ball $B(x, 1/n)$ doesn't intersect $\Xsetn$. Hence, $\Xsetn \cap B(x,1/(3n)) = \emptyset$. Therefore,
\begin{multline}
\label{eq: cov rad}
\PP(\rho(\Xsetn, \Xset) > 2/n) \le \PP(\exists x \in \mathcal E_n: \Xsetn \cap B(x,1/(3n)) = \emptyset) \le \\ 
\le|\mathcal E_n| (1 - \Phi(1/(3n))^N 
 \le  |\mathcal E_n| \rme^{-N \Phi(1/(3n)}.
\end{multline}

Let $1/(3n) = \Phi^{-1}(\alpha \log N/N)$ for some $\alpha > 0$ to be chosen later. Then $\Phi(1/(3n)) = \alpha \log N/N$. Inequalities \eqref{eq: cardinality of E_n} and \eqref{eq: cov rad} imply 
\begin{eqnarray}
\PP(\rho(\Xsetn, \Xset) > 2/n) \le \frac{N^{1 - \alpha}}{\alpha \log N}.
\end{eqnarray}
Let us fix any $p \geq 1$. Then 
\begin{eqnarray}
\EE{\rho^p(\Xsetn, \Xset)}^{1/p} \le \frac{2}{n} +  \sqrt{\dx} \left(\frac{ N^{1 - \alpha}}{\alpha \log N} \right)^{1/p} = 6 \Phi^{-1} (\alpha \log N/N) + \sqrt{\dx} \left(\frac{ N^{1 - \alpha}}{\alpha \log N} \right)^{1/p}\,.
\end{eqnarray}
Since
$$
\Phi^{-1}(r) = \frac{2}{\sqrt \pi} \Gamma^{1/\dx}(\dx/2 +1) r^{1/\dx} \le 2 \sqrt{\rme \dx / \pi} (\rme r)^{1/\dx},
$$
we get
$$
\EE{\rho^p(\Xsetn, \Xset)}^{1/p} \le 12 \sqrt{\rme \dx / \pi} \left(\frac{\alpha \rme \log N}{N} \right)^{1/\dx} + \sqrt{\dx} \left(\frac{ N^{1 - \alpha}}{\alpha \log N} \right)^{1/p}.
$$
It remains to take $\alpha = 1 + p/\dx$ to obtain the bound
$$
\EE{\rho^p(\Xsetn, \Xset)}^{1/p} \le 48 \sqrt{\dx} \left(\frac{ p\log N}{N} \right)^{1/\dx}.
$$
Hence, \eqref{cov rad lp} follows. To prove \eqref{cov rad prob} it remains to apply Markov's inequality.
\end{proof}
\vspace{-4ex}
\subsection{Auxiliary results}
\begin{prop}
\label{prop analysis}
For any $\Delta > 0$,
$$
\int_0^\Delta \sqrt{\log(1+1/x)}\rmd x \lesssim \Delta \sqrt{\log(1+1/\Delta)} + \Delta.
$$
\end{prop}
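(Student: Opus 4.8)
The plan is to prove the bound by a single integration by parts, after which the entire difficulty reduces to controlling one elementary remainder integral. Writing $g(x) = \sqrt{\log(1+1/x)}$ and integrating by parts with $u = g(x)$, $\mathrm{d}v = \mathrm{d}x$, I would use that $\frac{\mathrm{d}}{\mathrm{d}x}\log(1+1/x) = -\frac{1}{x(x+1)}$, so that $g'(x) = -\frac{1}{2x(x+1)\sqrt{\log(1+1/x)}}$. This yields
$$
\int_0^\Delta \sqrt{\log(1+1/x)}\,\mathrm{d}x = \Bigl[x\sqrt{\log(1+1/x)}\Bigr]_0^\Delta + \frac12\int_0^\Delta \frac{\mathrm{d}x}{(x+1)\sqrt{\log(1+1/x)}}\,.
$$
The boundary term at $x=\Delta$ is exactly $\Delta\sqrt{\log(1+1/\Delta)}$, the first summand on the right-hand side of the claim; the boundary term at $x=0$ vanishes, since $x\sqrt{\log(1+1/x)} \sim x\sqrt{\log(1/x)} \to 0$ as $x \to 0^+$.

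It then remains to show that the remainder integral $J \eqdef \frac12\int_0^\Delta (x+1)^{-1}(\log(1+1/x))^{-1/2}\,\mathrm{d}x$ is at most $\Delta/2$, which finishes the proof since $\Delta/2 \le \Delta$. The key elementary inequality I would invoke is $\log(1+t) \ge t/(1+t)$ for $t > 0$ (immediate by comparing derivatives, both sides vanishing at $t=0$). Taking $t = 1/x$ gives $\log(1+1/x) \ge 1/(x+1)$, hence $(\log(1+1/x))^{-1/2} \le \sqrt{x+1}$, and therefore the integrand of $J$ is bounded by $(x+1)^{-1}\sqrt{x+1} = (x+1)^{-1/2} \le 1$. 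Integrating this pointwise bound over $(0,\Delta)$ gives $J \le \Delta/2$ directly, and combining the two displays yields $\int_0^\Delta \sqrt{\log(1+1/x)}\,\mathrm{d}x \le \Delta\sqrt{\log(1+1/\Delta)} + \Delta/2$, which is the asserted bound with an explicit constant.

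The only genuinely delicate point, and where a naive estimate goes wrong, is the treatment of $J$. If one drops the factor $(x+1)^{-1}$ and instead bounds $(\log(1+1/x))^{-1/2}$ by its value at the endpoint $x=\Delta$ (using monotonicity), one obtains only $J \lesssim \Delta/\sqrt{\log(1+1/\Delta)}$, which grows like $\Delta^{3/2}$ relative to $\Delta$ as $\Delta \to \infty$ and is too weak to absorb into the $+\Delta$ term. Keeping the $(x+1)^{-1}$ factor and pairing it precisely with the lower bound $\log(1+1/x) \ge 1/(x+1)$ is exactly what makes the integrand uniformly bounded by $1$ and delivers clean $O(\Delta)$ control uniformly in $\Delta$. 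So the main obstacle is recognizing this pairing rather than any hard analysis; once it is in place the estimate is a one-line integration.
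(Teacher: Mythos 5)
Your proof is correct, and it takes a genuinely different route from the paper's. The paper splits the integral at an interior point and argues by cases: for $\Delta<1$ it bounds $\sqrt{\log(1+1/x)}$ by $x^{-1/2}$ on a small initial segment $(0,\Delta^{100}/2)$ and by its value at the left endpoint on the rest, and for $\Delta>1$ it splits at $x=1$; the constants remain implicit in the $\lesssim$. You instead integrate by parts once, which produces the leading term $\Delta\sqrt{\log(1+1/\Delta)}$ exactly as a boundary term, and then control the remainder $J=\tfrac12\int_0^\Delta (x+1)^{-1}(\log(1+1/x))^{-1/2}\,\mathrm{d}x$ by pairing the factor $(x+1)^{-1}$ with the elementary inequality $\log(1+1/x)\ge 1/(x+1)$, making the integrand uniformly bounded by $1$. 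I checked the details: $\tfrac{\mathrm{d}}{\mathrm{d}x}\log(1+1/x)=-1/(x(x+1))$ is right, the boundary term at $0$ vanishes since $x\sqrt{\log(1/x)}\to 0$, and the remainder bound $J\le\Delta/2$ follows as you say. Your argument is uniform in $\Delta$ (no case split), slightly sharper (it gives the explicit constant bound $\Delta\sqrt{\log(1+1/\Delta)}+\Delta/2$), and your closing remark correctly identifies why the naive endpoint bound on $(\log(1+1/x))^{-1/2}$ alone would fail for large $\Delta$. The paper's version is more pedestrian but requires no clever pairing; either suffices for the way the proposition is used (with an unspecified absolute constant).
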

\begin{proof}
Consider first the case $\Delta < 1$. In this case 
\begin{eqnarray*}
\int_0^\Delta \sqrt{\log(1+1/x)}\rmd x &= & \int_0^{\Delta ^{100}/2} \sqrt{\log(1+1/x)}\rmd x + \int_{\Delta ^{100}/2}^\Delta  \sqrt{\log(1+1/x)}\rmd x \\
&\lesssim& \int_0^{\Delta ^{100}/2} x^{-1/2} \rmd x + \int_{\Delta ^{100}/2}^\Delta  \sqrt{\log(1+1/x)}\rmd x \\
& \lesssim & \Delta^{50} + (\Delta  - \Delta ^{100/2}) \sqrt{ \log(1+ 2/\Delta ^{100})} \lesssim \Delta  \sqrt{\log(1+ 1/\Delta )}.
\end{eqnarray*}
Second, if $\Delta > 1$,
\begin{eqnarray*}
\int_0^\Delta \sqrt{\log(1+1/x)}\rmd x = \int_0^{1} \sqrt{\log(1+1/x)}\rmd x + \int_{1}^\Delta  \sqrt{\log(1+1/x)}\rmd x \lesssim \Delta\,. 
\end{eqnarray*}
\end{proof}

\vspace{-4ex}
\section{Experiment Setup}
\label{sec:envs}

\vspace{-2ex}
\subsection{Environment description}
\vspace{-2ex}
\paragraph{Garnet}
The Garnet example is an MDP with randomly generated transition probability kernel \(\MK^a\) with finite state space \(\Xset\) and action space \(\Aset\). This example is described by a tuple $\tup{N_S, N_A, N_B}$. The first two parameters specify the number of states and actions, respectively. The last parameter is responsible for the number of states to which an agent can go from state \(x \in \Xset\) by performing action \(a \in \Aset\). In our case, we used \(N_S=20, N_A=5, N_B=2, \gamma = 0.9\). The reward matrix \(r^a(x)\) is set according to the following principle: for all state-action pairs, the reward is set to be uniformly distributed on [0, 1].

\paragraph{Frozen Lake}
The agent moves in a grid world, where some squares of the lake are walkable, but others lead to the agent falling into the water, so the game restarts. Additionally, the ice is slippery, so the movement direction of the agent is uncertain and only partially depends on the chosen direction. The agent receives 10 points only for finding a path to a goal square, whereas for falling into a hole it does not receive anything. We used the built-in \(4 \times 4\) map and 4 actions for the agent to perform in each state, if available (\texttt{right, left, up} and \texttt{down}). For this experiment, we assume that the reward matrix \(r^a(x)\) is known, and the \(\gamma\)-factor is set to be 0.9.

\paragraph{Chain}
Chain is a finite MDP where the agent can move only to 2 adjacent states, performing 2 actions from each state (\texttt{right} and \texttt{left}). Every chain has two terminal states at the ends. For a transition to the terminal states, the agent receives 10 points and the episode ends, otherwise the reward is equal to +1. Also, there is $p\%$ noise in the system, that is, the agent performs a uniformly random action with probability $p$. For experiments with chains, we set the \(\gamma\)-factor to 0.8, to ensure that Picard iterations converge.

\paragraph{NRoom}
NRoom is a discrete grid-world environment with connected rooms and with one large reward in a single room and small rewards elsewhere. Also, there are traps which lead to terminal states. At each state there are four actions: left, right, up, and down. With a small probability, the chosen action is ignored and a uniformly random action is chosen.

%

\paragraph{CartPole}
CartPole is an example of an environment with a finite action space and infinitely large state space. A reward equal to 1 is gained at every time step until failing or the end of the episode. In fact, CartPole does not have any specific stochastic dynamics, because transitions are deterministic according to actions, so for a non-degenerate case we should add some noise and we apply a normally distributed random variable to the angle. LD (Linear Deterministic) policy can be expressed as I$\{3\cdot\theta + \dot{\theta} > 0\}$, where $\theta$ is an angle between the pole and the normal to the cart.

\paragraph{Acrobot}
The environment consists of two joints, or two links. The torque is applied to the binding between the joints. The state space is six-dimensional, representing two angles (sine and cosine) characterizing the links' positions and the angles' velocities. Each episode starts with small perturbations of the parameters near the resting state with both of the joints in a downward position. At each time step the robot has a reward equal to -1, and it gets 0 in a terminal state, when the boundary has been reached. Also, to make the environment stochastic, a random uniform torque from $-1$ to $1$ is added to the force at each step.

\paragraph{TwinRooms}
TwinRooms is a grid-world environment with continuous state space. It is composed of two rooms separated by a wall, such that $\Xset = ([0,1-\Delta]\cup[1+\Delta, 2]) \times [0,1]$ where $2\Delta = 0.1$ is the width of the wall, as illustrated by Figure~\ref{fig:twin_rooms}. There are four actions: left, right, up, and down, each one resulting in a displacement of 0.1 in the corresponding direction. A two-dimensional Gaussian noise is added to the transitions, and, in each room, there is a single region with non-zero reward. The agent has 0.5 probability of starting in each of the rooms, and the starting position is at the room’s bottom-left corner.

\vspace{-4ex}
\begin{figure}[H]
    \centering
    \includegraphics[scale=0.15]{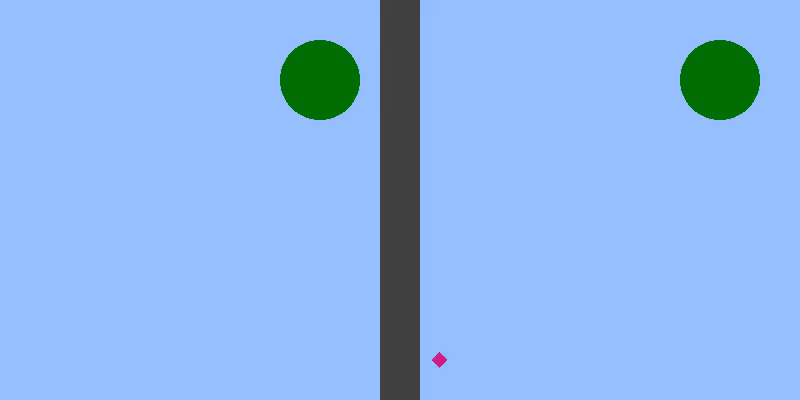} 
    
    \caption{Continuous grid-world environment with two rooms separated by a wall. The circles represent the regions with non-zero rewards.}
    \label{fig:twin_rooms}
\end{figure}

\vspace{-4ex}
\subsection{Experimental setup}
Code is available at \url{https://github.com/levensons/UVIP}. For the sake of completeness, we provide below hyperparameters for the experiments run in Section~\ref{sec:num}.
\vspace{-4ex}
\begin{table}[htb]
\caption{Experimental hyperparameters} 
\begin{center}
\resizebox{0.5\textwidth}{!}{
\begin{tabular}{@{}|lc|c|c|c|c|@{}}\toprule
Environment &\phantom{abc} & $M_1$ & $M_2$ & discount $\gamma$ & $N$ \\ 
\toprule
Garnet      &  & $3000$   & $3000$   & $0.9$ & $-$ \\ \hline
Frozen Lake &  & $1000$   & $1000$   & $0.9$ & $-$\\ \hline
Chain       &  & $1000$   & $1000$   & $0.8$ & $-$\\ \hline
CartPole    &  & $150$ & $150$ & $0.9$ & $1500$\\ \hline
Acrobot     &  & $150$ & $100$ & $0.9$ & $4000$\\
\bottomrule
\end{tabular}}
\label{tab:parameters}
\end{center}
\end{table}
\vspace{-6ex}
\subsection{Auxiliary algorithms}
In Section~\ref{sec:num} we use the Value Iteration algorithm from \cite{szepesvari2010algorithms}, Chapter 1.

\end{document}